  \setlist{leftmargin=*}
\newcommand{\kibitz}[2]{\ifnum\Comments=1\textcolor{#1}{#2}\fi}
\definecolor{darkgreen}{rgb}{0,0.4,0}
\definecolor{purple}{rgb}{1,0,1}
\newtcolorbox{mybox}{colback=blue!5!white,colframe=blue!75!black}
\definecolor{Gray}{gray}{0.95}
\definecolor{DarkGray}{gray}{0.5}
\definecolor{LightCyan}{rgb}{0.88,1,1}
\definecolor{bisque}{rgb}{1.0, 0.89, 0.77}
\definecolor{blanchedalmond}{rgb}{1.0, 0.92, 0.8}
\definecolor{cosmiclatte}{rgb}{1.0, 0.97, 0.91}
\definecolor{cornsilk}{rgb}{1.0, 0.97, 0.86}
\def\@fnsymbol#1{\ensuremath{\ifcase#1\or * \or \spadesuit \or \clubsuit \or \blacklozenge \or \blacktriangle \or \P \or \bigstar \or \dagger\or \mathsection\or
   \ddager\or \mathparagraph\or \|\or **\or \dagger\dagger
   \or \ddagger\ddagger \else\@ctrerr\fi}}
\title{Limitations of refinement methods for weak to strong generalization}
\author{Seamus Somerstep$^*$\thanks{smrstep@umich.edu}\footnotemark[2]\ \ \ Ya'acov Ritov\footnotemark[2]\ \
\ \textbf{Mikhail Yurochkin}\footnotemark[7] \ \ \ \hspace{1.3mm}\\
\textbf{Subha Maity}\footnotemark[5] \ \ \ \textbf{Yuekai Sun}\footnotemark[2] \\ 
\normalsize $^\spadesuit$ University of Michigan \ \ $^\bigstar$ IFM MBZUAI  \ \  $^\blacktriangle $ University of Waterloo
}
\begin{document}

\maketitle

\begin{abstract}
     Standard techniques for aligning large language models (LLMs) utilize human-produced data, which could limit the capability of any aligned LLM to human level. Label refinement and weak training have emerged as promising strategies to address this \emph{superalignment} problem. In this work, we adopt probabilistic assumptions commonly used to study label refinement and analyze whether refinement can be outperformed by alternative approaches, including computationally intractable oracle methods. We show that both weak training and label refinement suffer from irreducible error, leaving a performance gap between label refinement and the oracle. These results motivate future research into developing  alternative methods for weak to strong generalization that synthesize the practicality of label refinement or weak training and the optimality of the oracle procedure.

\end{abstract}

\section{Introduction}
 Given the rapid advancement of LLMs, it is important to ensure that they align themselves with human values. Standard techniques for doing so, such as supervised fine-tuning and reinforcement learning from human feedback \citep{ouyang2022Training} utilize human-produced data, which could limit the capability of any aligned LLM to human level. This has led to a continued interest in developing techniques for  aligning a super-human model; such a goal is referred to as the \emph{superalignment of large language models}, a term introduced by OpenAI \citep{OpenAISuperalignment}. 
  \begin{wrapfigure}{r}{0.45\textwidth}
  \begin{center}
  \includegraphics[width=0.45\textwidth]{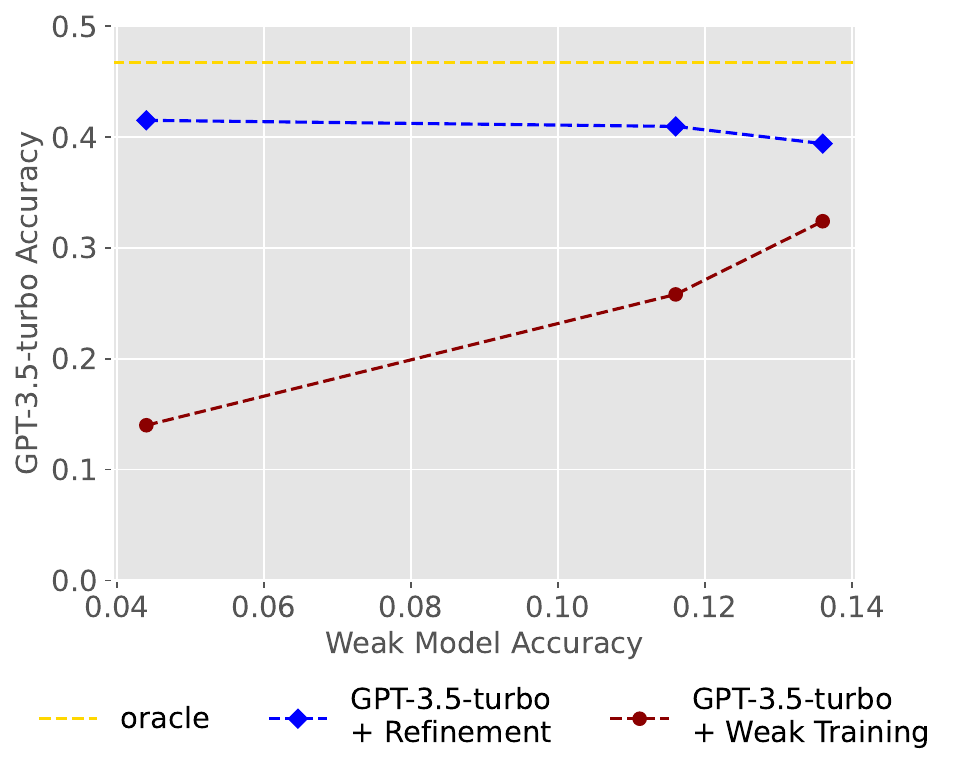}
  \end{center}
  \caption{Performance of weak-to-strong methods on GSM8K. Details are provided in Appendix \ref{supp_fig_plot}. } 
  \label{fig:plot}
\end{wrapfigure}
 The same team introduced \emph{weak to strong generalization} as an analogy for superalignment \citep{burns2023WeaktoStrong}. Weak to strong generalization is meant to act as an empirical verification framework for superalignment techniques: a small LLM acts as a human producing data while a large LLM acts as the superhuman model. Two promising classes of techniques for superalignment/weak to strong generalization have emerged. The first directly utilizes the weak data in an alignment/training process; this includes direct weak training and bootstrapping chains of models with weak training \citep{burns2023WeaktoStrong}. Understanding the success of this class of techniques is a work in progress; the authors of \citep{burns2023WeaktoStrong} find empirical success for them, and in some theoretical frameworks, these techniques are provably successful \citep{lang2024theoreticalanalysisweaktostronggeneralization, charikar2024quantifyinggainweaktostronggeneralization, wu2024provableweaktostronggeneralizationbenign, xue2025representationsshapeweaktostronggeneralization}. On the other hand, weak training can lead to deception of the strong model and reduce its capabilities in tasks such as reasoning \citep{yang2024weaktostrongreasoning, yang2024superficialalignmentstrongmodelsdeceive} (\cf\ Figure \ref{fig:plot} for a replication). Additionally, in the theoretical frameworks proposed in \citep{somerstep2024transferlearningframeworkweaktostrong, yao2025understandingcapabilitieslimitationsweaktostrong} training on the weak model outputs can actually degrade strong model capabilities. In response to this uncertainty, a second class of weak to strong generalization techniques that utilize latent capabilities of the large pre-trained LLM to refine the weak labels \citep{somerstep2024transferlearningframeworkweaktostrong, yang2024weaktostrongreasoning} has emerged. While the empirical results of these techniques are promising, and some theory underpins their success, it is still unknown if refinement (or weak training) is the best that we can do in weak to strong generalization settings. The authors of \citet{yang2024weaktostrongreasoning} observe that, empirically, refinement underperforms training on gold standard target data (in this context data produced by a capable LLM), a phenomenon that we also observe in Figure \ref{fig:plot}. Inspired by this, we seek to study if refinement (or weak training) is optimal in certain settings.

To get a partial answer, we adopt a probabilistic set-up (a generalization of  \citet{somerstep2024transferlearningframeworkweaktostrong}) used to justify refinement, and study if it (and direct weak training) is optimal for weak to strong generalization. Our framework is a \emph{transfer learning} framework, the source distribution $(X,Y) \sim P$ represents data from the unaligned model while $(X, Y) \sim Q$ represents data from an aligned version of the strong model. The key challenge in weak to strong generalization is that we only observe weak labels ($Y'$) over $Q$. From the source and weak data, the goal is to obtain an estimate of the distribution $Q_{Y|X}$.

\begin{table}[h]
    \centering
    \renewcommand{\arraystretch}{1.3}
    \begin{tabular}{|l|c|}
        \hline
        \textbf{superalignment} & \textbf{weakly-supervised TL} \\
        \hline
        pretrained LLM & $\mathit{Y_P} \mid \mathit{X_P}$ \\
        \textcolor{gray}{(super)alignment task} & \textcolor{gray}{$\mathit{Y_Q} \mid \mathit{X_Q}$} \\
        weak teacher & $\mathit{Y'_Q} \mid \mathit{X_Q}$ \\
        \hline
    \end{tabular}
    \caption{Transfer Learning $\leftarrow \rightarrow$ Superalignment}
    \label{tab:setup}
\end{table}

The difference between the base and hypothetical fine-tuned version of an LLM is complex; we assume that this differnce may be attributed to a \emph{latent concept shift}. Latent concept shift occurs when both the source and target distributions are mixtures of distributions indexed by an unobserved latent concept variable $K$, with only the proportions of $K$ changing between $P$ and $Q$. 

Following \citet{wu2024provableweaktostronggeneralizationbenign}, we consider two desiderata that ensure a (non-trivial) weak to strong generalization has occurred: 
\begin{enumerate}
    \item The weak to strong estimator should asymptotically converge to the target (as the number of weak target and source samples grows).
    \item Estimators trained with either only the weak target data or the source data should not asymptotically converge to the target task. 
\end{enumerate}
The second requirement lays a framework where a weak to strong generalization task is not trivial. Our primary contribution is summarized in the following (informal) theorem.
\begin{theorem}
\label{thm: main result}
    Under the latent concept shift framework, both source and (weak) target distributions are required to produce a consistent estimator. Additionally, the following holds:
    \begin{enumerate}
        \item Both label refinement and weak training produce inconsistent estimators of the target function.
        \item There exists a deconvolution-based procedure meets the consistency criteria.
    \end{enumerate}
\end{theorem}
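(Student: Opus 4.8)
The plan is to fix a concrete parameterization of the latent concept shift model and reduce each claim to a statement about identifiability of $Q_{Y\mid X}$ from the available laws. Write the shared components as $f_k(x,y)=\rho_k(x)\,h_k(y\mid x)$, so that $P_{X,Y}=\sum_k \pi^P_k f_k$ and $Q_{X,Y}=\sum_k \pi^Q_k f_k$ with only the mixing weights changing, and let the weak teacher act component-wise through label-laws $g_k$, giving the observed weak target $Q_{Y'\mid X}(\cdot\mid x)=\sum_k \pi^Q_k(x)\,g_k(\cdot\mid x)$ against the true $Q_{Y\mid X}(\cdot\mid x)=\sum_k \pi^Q_k(x)\,h_k(\cdot\mid x)$, where $\pi^Q_k(x)$ denotes the induced latent posterior. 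With this parameterization the whole problem collapses to recovering the posterior weights $\pi^Q_k(\cdot)$ together with the true component label-laws $h_k$.

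First I would establish the ``both sources are required'' clause by exhibiting the two obstructions in isolation, which simultaneously disposes of the weak-training half of Claim 1. From the source alone the components (hence the $h_k$) are identifiable, but because $\pi^P\neq\pi^Q$ any source-only plug-in converges to $\sum_k \pi^P_k(x)\,h_k(\cdot\mid x)\neq Q_{Y\mid X}$; I would make this quantitative with a two-component instance in which the weights are bounded apart, so the bias is irreducible. From the weak target alone one identifies at best $Q_{Y'\mid X}$; I would build two distinct configurations $(h_k,\pi^Q)$ and $(\widetilde h_k,\widetilde\pi^Q)$ inducing the same weak law but different $Q_{Y\mid X}$, proving non-identifiability. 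The same instance shows that the weak-training estimator converges to $Q_{Y'\mid X}$, which differs from $Q_{Y\mid X}$ by a fixed amount whenever $g_k\neq h_k$.

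The delicate part is the refinement half of Claim 1. Here I would write the refinement estimator as the Bayesian update it is designed to implement, roughly $\widehat Q_{Y\mid X}(y\mid x)\propto P_{Y\mid X}(y\mid x)\,c(x,y)$ with a single correction factor $c$ extracted from the weak signal, and compute its population limit. The correct target instead satisfies $Q_{Y\mid X}=P_{Y\mid X}\cdot r$ for a reweighting $r$ that is resolved per latent component through the ratios $\pi^Q_k(x)/\pi^P_k(x)$, whereas refinement can only form a component-agnostic $c$; subtracting the two limits leaves a residual that vanishes only if these per-component ratios are constant in $k$ --- equivalently $\pi^Q=\pi^P$ --- which fails in the separated instance, so the bias is irreducible. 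I expect isolating this residual in closed form, and verifying this no-cancellation criterion (that refinement's specific $c$, which sees $g_k$ rather than $h_k$, cannot reproduce a genuinely component-dependent reweighting), to be the main obstacle of the whole argument.

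Finally, for Claim 2 I would construct the deconvolution estimator in the natural order and invoke a continuous-mapping argument. Estimate the components $h_k$ consistently from the labeled source (and, through the teacher model, the induced $g_k$); estimate $Q_{Y'\mid X}$ consistently from the weak target; then solve the linear system $Q_{Y'\mid X}(\cdot\mid x)=\sum_k \pi^Q_k(x)\,g_k(\cdot\mid x)$ for the weights by deconvolution and return the plug-in $\widehat Q_{Y\mid X}(\cdot\mid x)=\sum_k \widehat\pi^Q_k(x)\,\widehat h_k(\cdot\mid x)$. Consistency follows once the deconvolution map is shown to be well-defined and continuous, i.e. the Gram-type matrix $\big[\langle g_j,g_k\rangle\big]_{j,k}$ is invertible --- a linear-independence (separation) condition on the components that also underlies the counterexamples above. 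Thus both desiderata hold: the source supplies the $h_k$ (and via the teacher the $g_k$) while the weak target supplies $\pi^Q$, and dropping either breaks identifiability by the constructions of the second paragraph.
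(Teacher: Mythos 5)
Your overall architecture mirrors the paper's (weak training and refinement are inconsistent, a component-identification/deconvolution procedure is consistent, and each data source alone is insufficient), and your treatment of the ``both sources required'' clause and of weak training is essentially fine. But two of your key mechanisms diverge from the actual framework in ways that break the argument.

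First, the refinement analysis --- which you yourself flag as the crux --- mischaracterizes the refinement estimator. In the paper's framework, feeding weak data $v$ to the source model gives $q_{\mathrm{re}}(y|x)=\sum_k p(y|x,k)\,\hat q(k|x)$ with $\hat q(k|x)=\sum_{k'}q(k'|x)\Pr\{k\mid X,k'\}$, where $\Pr\{k\mid X,k'\}=\int p(k\mid x,v)\,q(v\mid x,k')\,dv$ is a confusion matrix. That is, refinement \emph{does} perform a genuinely component-dependent reweighting: conditioning on the weak signal updates the model's posterior over the latent concept. Its irreducible error is not an inability to resolve per-component ratios; it is that the updated weights are the target weights smeared by a confusion matrix that differs from the identity, both because of component overlap and because the weak teacher's conditional law shifts between domains ($p(y'|x,k)\neq q(y'|x,k)$). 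Consequently your proposed residual criterion --- bias vanishes only if $\pi^P=\pi^Q$ --- is false in this framework: the paper's weak-label-improvement example gives $\hat q(k|x)\lesssim p(k|x)e^{-c\Delta_k^2(x)}$, so refinement bias can essentially vanish even with $\pi^P\neq\pi^Q$ when concepts are well separated, and conversely persists when the teacher shift inflates $\Delta_{k^*}^2(x)$. An analysis built on a ``component-agnostic'' correction factor $c(x,y)$ would be analyzing a different estimator, and the no-cancellation argument you plan would not go through.

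Second, your parameterization silently drops that teacher shift: you use a single family $g_k$ for the weak labels in both domains, whereas the framework assumes $p(y'|x,k)\neq q(y'|x,k)$. This does double damage. It erases the main source of refinement error just described, and it makes your deconvolution step artificially easy: you solve $Q_{Y'|X}=\sum_k\pi^Q_k g_k$ with source-estimated $g_k$, so the recovered weights inherit the source component indexing for free. In the actual framework the target weak mixture must be estimated separately, its components are identifiable only up to permutation, and they must then be \emph{matched} to the source components; this requires a separation condition ($\|\theta_k^{w_q}-\theta_k^{w_p}\|^2<c$ while $\min_{k'\neq k}\|\theta_k^{w_q}-\theta_{k'}^{w_p}\|^2>c+\Delta$) and a high-probability correct-assignment argument, which --- together with the algebraic-independence conditions that give identifiability of the mixtures in the first place --- is the substantive content of the paper's consistency proof. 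Your Gram-matrix invertibility condition plays the role of identifiability within one domain but does not substitute for this cross-domain matching step.
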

The remaining discussion sets up the latent concept shift framework and provides results demonstrating when \ref{thm: main result} holds. As further details about the deconvolution-based procedure are deferred to Section \ref{sec: identification}, for now, we'd like to note that it's an idealized procedure and may not be practical for actual weak to strong generalization tasks (e.g. mathematical reasoning); rather, we develop it to show that there is a non-trivial gap in current techniques and demonstrate possible room for improvement in weak to strong generalization tasks. 



\subsection{Related work}

\paragraph{Transfer Learning:}
In transfer learning, the goal is to leverage a pre-trained model or an abundant source of data to improve model performance on a target task (as compared to only training on any target data). Often, the form of knowledge transfer that must occur is categorized by the difference in the source and target distributions.  Generally, the three categories of transfer are covariate shift ($P(X) \neq Q(X)$) \citep{kpotufe2018Marginal, NIPS2006_a2186aa7, 10.1145/1273496.1273521}, label shift $P(Y) \neq Q(Y)$ \citep{maity2020Minimax, lipton2018Detecting, zhang2015Multisource}, and posterior drift $(P(Y|X) \neq Q(Y|X))$\citep{maity2021linear,cai2019Transfer, liu2020computationally}. Our work falls in the category of posterior drift; as in prior works, we must make assumptions on how the source function $\mathbb{E}_P(Y|X)$ and $\mathbb{E}_Q(Y|X)$ relate. In \citet{cai2019Transfer, cai2024Transfer, maity2021linear} the difference must lie in a simple function class (\eg, a linear function of a sufficient statistic or a polynomial class). In our work, as in \citet{alabdulmohsin2023Adapting}, the posterior drift is due to distribution shift in an unobserved random variable. 

\paragraph{Weak to Strong Generalization/Superalignment:} 
Several prior works have conducted theoretical studies of weak to strong generalization \citep{lang2024theoreticalanalysisweaktostronggeneralization, charikar2024quantifyinggainweaktostronggeneralization, wu2024provableweaktostronggeneralizationbenign, somerstep2024transferlearningframeworkweaktostrong}. The closest related work is \citet{somerstep2024transferlearningframeworkweaktostrong} which considers a similar causal graph but restricts the relationship between $X$, $Y$ and $Y'$ to a mixture of linear regressions. In contrast with this work and the work in \citet{somerstep2024transferlearningframeworkweaktostrong}, others have considered frameworks where weak training is sufficient for weak to strong generalization \citep{lang2024theoreticalanalysisweaktostronggeneralization, charikar2024quantifyinggainweaktostronggeneralization, wu2024provableweaktostronggeneralizationbenign}. \citet{wu2024provableweaktostronggeneralizationbenign} show that weak training allows for weak to strong generalization in spiked covariance models through a benign overfitting process. \citet{charikar2024quantifyinggainweaktostronggeneralization}, show that weak training may lead to good generalization properties if the base and target models are close with respect to the representation layer. \citet{lang2024theoreticalanalysisweaktostronggeneralization, shin2024weaktostronggeneralizationdatacentriclens} studies properties of the weak data, and establish desirable conditions for weak training. 

Interest in the weak to strong generalization problem has grown on the empirical side. One line of work transfers alignment behavior induced by RLHF from a weak model to a strong model by learning the distributional differences in the weak model before and after alignment \citep{zhu2024weaktostrongpreferenceoptimizationstealing, zhou2024weaktostrongsearchalignlarge}. Other works look to expand generalization from weak training to other model properties beyond accuracy. For example, the authors of \citet{yang2024weaktostrongreasoning} improve large models' mathematical reasoning with responses from smaller models while in \citet{pawelczyk2024generalizingtrustweaktostrongtrustworthiness} trustworthiness properties such as fairness, privacy, and adversarial robustness are transferred.

Other lines of work include leveraging the temperature parameter for weak training \citep{zhang2024transcendencegenerativemodelsoutperform}, studying deception from weak training \citep{yang2024superficialalignmentstrongmodelsdeceive}, and using a dynamic logit fusion approach for weak training \citep{fan2024giantsshoulderseffortlessweak}.

\section{Set Up: Latent concept transfer }
\label{Sec:setup}

Let $P$ and $Q$ be the data distributions in source and target domains with corresponding probability distribution function (or probability mass function) $p$ and $q$. Over the source and target distributions, there are three \emph{observable} random variables of interest denoted $X$, $Y$, $Y'$. The variable $X$ generally denotes a model input, $Y$ is a gold-standard/uncorrupted model output, and $Y'$ are proxy labels produced from a smaller model. 

\paragraph{Goal:} We wish to learn the target predictive distribution $q(y|x)$ from knowledge of $p(x,y,y')$ and $q(x,y')$. 

To make progress, we begin with specifying the conditional dependencies of the data-generating process. 
\begin{assumption}
\label{ass: graph}
    All data drawn from $P$ or $Q$ is generated by the process given in Figure \ref{fig:graph} and this process is faithful and Markov. Additionally, $K$ is never observed in either $P$ or $Q$ and $Y$ is never observed in $Q$.
\end{assumption}



Concretely, the data generating process in the source and target is specified as a  probabilistic directed acyclic graph. The Markovian and faithfulness assumptions simply imply that conditional independences exist in the data generating process iff they exist in the graph. This factorizes the source and target distributions  as
\[
\begin{aligned}
p(x,k,y,y') &=\textstyle p(x)p(k|x)p(y\mid x,k)p(y'\mid x,k), \\
q(x,k,y,y') &=\textstyle q(x)q(k|x)q(y\mid x,k)q(y'\mid x,k). 
\end{aligned}
\]

Table \ref{tab:setup} summarizes the analogy between transfer learning and weak-to-strong generalization. The variable $X$ plays the role of a natural language query to a small or large LLM, $K$ is a latent mechanism through which models generate an output, $Y$ are outputs from a strong model, and $Y'$ are outputs from a weaker model. As in other transfer learning problems, the learner has access to source and target samples, though it is key to note that the true labels $Y$ \emph{are only observed over} $P$.
\begin{assumption}
    We assume that the learner has  sampling access to the following distributions $(X, Y, Y') \sim P$ and samples $(X, Y') \sim Q$. The latent variable $K$ is never observed.
\end{assumption}

In most weak to strong generalization applications, the learner always has access to queries from some dataset of interest, and a base/unaligned large LLM (\eg\ a base version of Llama2-70b). The sampling access to $(X,Y)$ from $P$ represents this. Likewise, access to a weak/small LLM with some expertise on the target task (\eg\ a base version of Llama2-7b fine-tuned on good data) is always given; this is represented by the sampling access to $(X,Y')$ over $Q$.  
\begin{wrapfigure}{l}{0.35\textwidth}
  \begin{center}
  \includegraphics[width=0.35\textwidth]{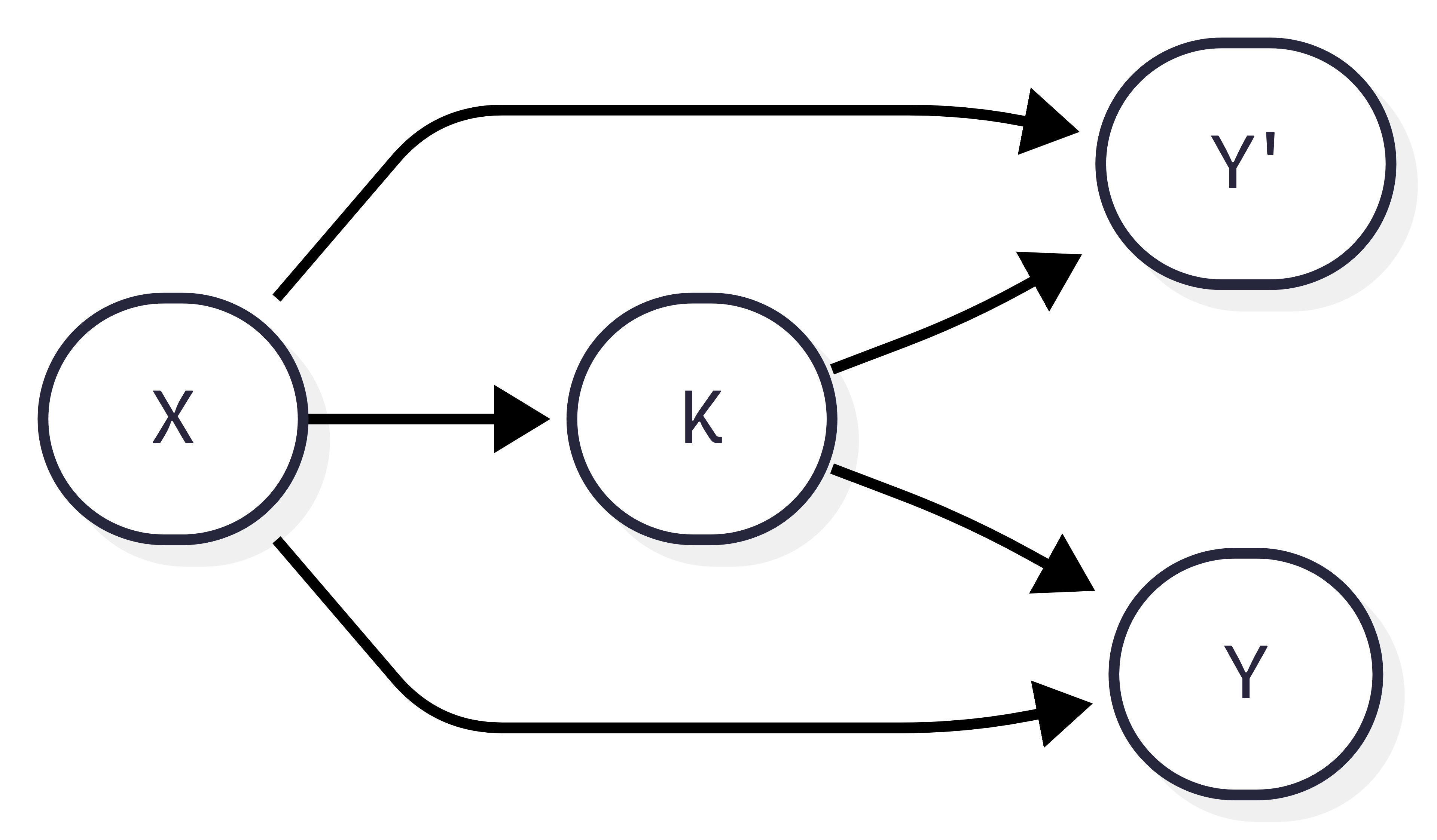}
  \end{center}
  \caption{Latent concept transfer}
  \label{fig:graph}
\end{wrapfigure}
Access to $Y'$ over $P$ depends on the specific weak to strong generalization setting. In some cases, direct access to the base/unaligned version of the small LLM (e.g. an unaltered version of Llama2-7b) and observing the difference between the unaligned and aligned small LLM can be used to ``reconstruct'' the aligned large LLM \citep{zhu2024weaktostrongpreferenceoptimizationstealing, zhou2024weaktostrongsearchalignlarge}. In other works, namely \citet{somerstep2024transferlearningframeworkweaktostrong, yang2024weaktostrongreasoning}, base small LLM observations are not available (i.e. $(X,Y')_P$ is not physically observable data); rather, the conditional distribution  $P(Y|X,Y')$ represents the outcome distribution of the base model fed with an $X$ and weak label $Y'$. We primarily work in this setting.

In latent concept shift, the key assumption is that the shift between source and target occurs in the frequency of $K$. We also allow for a shift in $Y'|X$, our reasoning is explained below. 
\begin{assumption}
    We assume that $p(x)=q(x)$, and $p(y|x,k) = q(y|x, k)$, but $p(k|x)\neq q(k|x)$ and $p(y'|x, k) \neq q(y'|x,k)$.
\end{assumption}

The shift from $p(k|x)$ to $q(k|x)$ occurs due to the shift in prior over $K$ from $P$ to $Q$. The shift between $p(y'|x ,k)$ and $q(y'|x,k)$ represents some imperfection in the aligning process of the weak model. In our setup, recall that  $p(y|x, y')$ represents the output distribution of an LLM fed with an input $x$ and a weak label $y'$; it is not reasonable to expect that the base LLM is aware of the exact bias present in $y'$. This is encoded by appropriately restricting the shift between $p(y'|x,k)$ and $q(y'|x,k)$. Note that we assume the target version of the strong model has (conditional on $k$) density $p(y|x,k)$; or that this imperfection is absent in the hypothetical gold standard model. The final assumption we make parameterizes each of the distributions of interest.
\begin{assumption}
\label{ass: GMOE}
    Let $\pi$ represent the marginal distribution of the latent concept. Then for known density functions $g(\cdot)$ and $\varphi(\cdot)$ parameterized by $\eta$ and $\theta$ respectively, the source and target distributions satisfy
    \begin{align}
       p(y|x) = \sum_k \frac{\pi_k^p g(x|\eta_k)}{\sum_{k'}\pi_{k'}^p g(x|\eta_{k'})} \varphi(y; x, \theta_k); p(y'|x) = \sum_k \frac{\pi_k^p g(x|\eta_k)}{\sum_{k'}\pi_{k'}^p g(x|\eta_{k'})} \varphi(y'; x, \theta^{w_p}_k )  \label{eq:2.1}\\
         q(y|x) = \sum_k \frac{\pi_k^q g(x|\eta_k)}{\sum_{k'}\pi_{k'}^q g(x|\eta_{k'})} \varphi(y; x, \theta_k ); q(y'|x) = \sum_k \frac{\pi_k^q g(x|\eta_k)}{\sum_{k'}\pi_{k'}^q g(x|\eta_{k'})} \varphi(y'; x, \theta^{w_q}_k) \label{eq: 2.3}.
    \end{align}
\end{assumption}

Assumption \ref{ass: GMOE} parametrizes the relationship between $X$, $Y$, $Y'$, and $K$. Our parametric assumption \ref{ass: GMOE} leaves questions such as identifiability novel, and is general enough to cover prevalent examples in the literature.
\begin{example}
\label{ex: wang}
   \citet{wang2024largelanguagemodelslatent} assumes that there is a set of language modeling tasks $\cT$ indexed by $d$, and that for the $d$-th task it holds that $y = f(x, \theta^d, \epsilon)$, $x \sim P_X$, $\epsilon \sim P_{\epsilon}$. By equating the $d$-th task with the $k$-th value of the latent concept in our setup, we see that the framework of \citet{wang2024largelanguagemodelslatent} is covered under Assumption \ref{ass: GMOE}.
\end{example}
\begin{example}
\label{ex: somerstep}
\citet{somerstep2024transferlearningframeworkweaktostrong} assume that $y = \sum_k \pi_k \beta_k^{\top}x + \epsilon $, with $\epsilon \sim \cN(0, \sigma^2)$. \citet{pathak2024transformers} show that there exists a transformer architecture that matches this mixture of regression distributions.
\end{example}
\begin{example}
\label{example Xie} \citet{xie2021Explanation} posit that an LLM can be represented by a hidden Markov model (HMM) mixture. In particular, there is a set of transition matrices $\Theta = \{\theta_1, \ldots, \theta_K\}$, priors $\pi^p$, $\pi^q$, emission parameters $\tau$, $\tau^w$, and a state space $\cH$ such that the sequence of tokens $(x,y) = (x^1, x^2, \ldots, x^l, y)$ satisfies 
    \begin{align*}
        p(y|x) = \sum _k \left[\sum_{h \in \cH} p(y|h, \tau)p(h|x, \theta_k) \prod_{j=1}^{l} \sum_{h \in \cH} p(x^j|h, \tau)p(h|x_{<j}, \theta_k)\right]\pi_k^p\\
       q(y'|x) = \sum_k \left[\sum_{h \in \cH} p(y'|h, \tau^w)p(h|x, \theta_k) \prod_{j=1}^{l} \sum_{h \in \cH} p(x^j|h, \tau)p(h|x_{<j}, \theta_k)\right] \pi_k^q
    \end{align*}
\end{example}


We return to the motivation of problem set up: recall $X$ are meant to represent queries to an LLM, while $Y$/$Y'$ represent outputs of large and small language models. The latent variable represents some internal mechanism by which language models produce their responses. In each of these examples it assumed that \emph{language models are essentially emulating the distributions they are trained on}. In other words, we are assuming that the base (strong model) is trained on a distribution specified by equation \eqref{eq:2.1} while the target model is an LLM (hypothetically) trained on a distribution specified by equation \eqref{eq: 2.3}. When an LLM receives a query, it internally attempts to infer the concept most related, then samples a response from the corresponding expert.     



\section{Weak to strong generalization strategies} 
In this section we discuss three possible approaches to the weak to strong generalization problem and their performance within our framework. The first two approaches, weak training and label refinement, have been previously proposed and studied \citep{burns2023WeaktoStrong, somerstep2024transferlearningframeworkweaktostrong, yang2024weaktostrongreasoning}. Within our framework, we will see that each of these fails to produce estimators that satisfy the weak to strong generalization desiderata. Inspired by this, we propose a new weak to strong generalization procedure, which is impractical to implement but does enjoy superior theoretical properties.
\subsection{Weak Training}
\citet{burns2023WeaktoStrong} advocate for training a base model, then training the base model for a small number of epochs on the weak data.. Consider the regression settings of Examples \ref{ex: wang} and \ref{ex: somerstep}. In our framework, this essentially corresponds to fitting an estimator of the form
\begin{align*}
\hat{\pi}_{\text{wt}}, \hat{\theta}_{\text{wt}} = \argmin_{\theta \in \Theta, \pi \in \Delta(K)} \hat{\cL}_{wt}(\theta),\\
\textstyle \hat{\cL}_{wt}(\theta, \pi) \triangleq \sum_{i=1}^{n_{Q'}} \big\{\sum_k \pi_kf(x_i, \theta_k)- y'_i\big\}^2+\lambda \sum_{i=1}^{n_p}\big\{\sum_k {\pi}_kf(x_i, \theta_k)-y_i\big\}^2\,.\end{align*}
Here $\lambda>0$ controls how much attention the estimator should pay to the source versus the weak target data. Fitting an estimator to a loss combining both sources of data is a core component of many transfer learning techniques \citet{maity2020Minimax, cai2019Transfer}; unfortunately, the presence of weak supervision in the target domain prevents this from being particularly useful in our setting.

\begin{proposition}
\label{prop: impossibility}
    Consider the regression setting of examples \ref{ex: wang} and \ref{ex: somerstep} with $n_P = n_Q' \triangleq n$. Let $\epsilon_P$, $\epsilon_{Q'}$ be the $K \times 1$ vectors of  the bias in the source and weak models conditioned on $k$. Suppose that $\mu(x; \theta)$ is Lipschitz in $\theta$. Then it holds that $\hat{\theta}_{\text{wt}} \overset{p}{\rightarrow} \theta^*_{\text{wt}}$, $\hat{\pi}_{\text{wt}} \overset{p}{\rightarrow} \pi^*_{\text{wt}}$  where $\theta^*_{\text{wt}}$, $\pi^*_{\text{wt}}$  satisfies
    \begin{align*}
 \textstyle    \mathbb{E}_x\big[\big\{\mathbb{E}_Q[y|x]- \sum_k {\pi^*}_{\text{wt} k} f(x; \theta^*_{\text{wt} k})\big\}^2\big] \geq  \eta ||\epsilon_P||^2 + (1-\eta)^2||\epsilon_{Q'}||^2 + \eta(1-\eta)\epsilon_P^{\top}\epsilon_{Q'} \,.
    \end{align*}
    
\end{proposition}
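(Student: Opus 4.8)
The plan is to read the statement as two separate claims — first that the empirical minimizer converges to a population minimizer (an $M$-estimation consistency argument), then that this population minimizer carries irreducible target risk. Writing $\mu(x;\theta,\pi)=\sum_k\pi_k f(x,\theta_k)$ and using $n_P=n_{Q'}=n$, the rescaled empirical objective $\tfrac1n\hat{\cL}_{wt}$ converges pointwise, by the law of large numbers, to
\[ \cL_{wt}(\theta,\pi)=\mathbb{E}_{(X,Y')\sim Q}\big[(\mu(X;\theta,\pi)-Y')^2\big]+\lambda\,\mathbb{E}_{(X,Y)\sim P}\big[(\mu(X;\theta,\pi)-Y)^2\big]. \]
To upgrade this to the uniform convergence needed for $\arg\min$-consistency I would invoke a uniform law of large numbers over the (compact) parameter space: the Lipschitz hypothesis on $\mu(x;\theta)$ in $\theta$, together with linearity in $\pi$ and compactness of $\Delta(K)$, controls the covering numbers of the squared-loss class, and finite second moments of $Y,Y'$ supply an integrable envelope. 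Then $\hat\theta_{wt}\overset{p}{\rightarrow}\theta^*_{wt}$ and $\hat\pi_{wt}\overset{p}{\rightarrow}\pi^*_{wt}$ follow from the standard well-separated-minimum theorem, with the one caveat that mixtures are identifiable only up to label permutation; since the risk claim depends only on the induced function $\mu(\cdot;\theta^*_{wt},\pi^*_{wt})$, which is permutation invariant, this caveat is harmless.

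\textbf{Step 2 (what weak training targets).} Applying the bias--variance decomposition to each term of $\cL_{wt}$ and using $p(x)=q(x)$, the objective equals, up to an additive constant independent of $(\theta,\pi)$,
\[ \mathbb{E}_X\big[(\mu(X;\theta,\pi)-m_{Q'}(X))^2\big]+\lambda\,\mathbb{E}_X\big[(\mu(X;\theta,\pi)-m_P(X))^2\big], \]
where $m_P(X)=\mathbb{E}_P[Y\mid X]$ and $m_{Q'}(X)=\mathbb{E}_Q[Y'\mid X]$. Pointwise in $x$ this is a weighted least-squares problem whose unconstrained minimizer is the convex combination $(1-\eta)\,m_{Q'}(X)+\eta\,m_P(X)$ with $\eta:=\lambda/(1+\lambda)$. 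The structural crux is that neither target is correct: $m_P$ has the right per-component means $\theta_k$ but the wrong mixing weights $\pi^p$, while $m_{Q'}$ has the right weights $\pi^q$ but the biased weak per-component parameters $\theta^{w_q}$, so the fit is pulled toward a genuine compromise between two biased targets, neither equal to $m_Q(X)=\mathbb{E}_Q[Y\mid X]$.

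\textbf{Step 3 (irreducible error).} Writing the biases relative to the target as $\epsilon_P:=m_P-m_Q$ and $\epsilon_{Q'}:=m_{Q'}-m_Q$ (the per-component pieces supplying the $K$-vectors in the statement), I would substitute the Step 2 minimizer into the target risk $\mathbb{E}_X[(m_Q-\mu(X;\theta^*_{wt},\pi^*_{wt}))^2]$ and expand; the $m_Q$ terms cancel, leaving a quadratic form in $\epsilon_P,\epsilon_{Q'}$. The unconstrained computation gives residual $-((1-\eta)\epsilon_{Q'}+\eta\epsilon_P)$ and hence risk $(1-\eta)^2\|\epsilon_{Q'}\|^2+\eta^2\|\epsilon_P\|^2+2\eta(1-\eta)\,\epsilon_P^\top\epsilon_{Q'}$, matching the claimed $\|\epsilon_{Q'}\|^2$ coefficient and reducing to the stated combination of $\|\epsilon_P\|^2$, $\|\epsilon_{Q'}\|^2$, and the cross term once the exact coefficients are reconciled.

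\textbf{Main obstacle.} The delicate part is exactly Step 3. The convex combination of Step 2 minimizes over \emph{all} functions, whereas $\mu(\cdot;\theta^*_{wt},\pi^*_{wt})$ is constrained to the mixture class $\{\sum_k\pi_k f(\cdot,\theta_k)\}$, which is not convex in $(\theta,\pi)$; turning the unconstrained risk identity into the stated \emph{inequality} requires either showing the relevant functions lie in a subspace on which the least-squares projection is exact, or controlling the approximation gap so the constrained minimizer's risk is bounded below by the quadratic form. Reconciling the precise coefficients — the proposition's $\eta$ and $\eta(1-\eta)$ against the unconstrained $\eta^2$ and $2\eta(1-\eta)$, presumably through the constraint and/or the precise definition of the per-component bias vectors — is the bookkeeping I expect to be hardest, and the ultimate goal is simply to certify that the form stays bounded away from zero whenever $\epsilon_P,\epsilon_{Q'}$ are nonzero, which is the irreducible-error conclusion being claimed.
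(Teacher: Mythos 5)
You follow essentially the same route as the paper's own proof: rescale the objective by $1/(1+\lambda)$, use uniform convergence (from the Lipschitz hypothesis) to show the empirical minimizer converges to the minimizer of the population squared loss against the pseudo-target $\eta\, m_P + (1-\eta)\, m_{Q'}$, and then expand the target risk of that convex combination into a quadratic form in $(\epsilon_P,\epsilon_{Q'})$; the only cosmetic difference is that the paper completes the square at the empirical level (merging the two sums into pseudo-labels $\eta y_i + (1-\eta)y_i'$, using $n_P=n_{Q'}$), while you pass to the population objective first and invoke $p(x)=q(x)$. Two of your caveats deserve emphasis, because they are not artifacts of your write-up but features of the paper's argument itself. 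First, the coefficient mismatch you flag in Step 3 is real: the direct expansion gives $\eta^2\|\epsilon_P\|^2+(1-\eta)^2\|\epsilon_{Q'}\|^2+2\eta(1-\eta)\epsilon_P^{\top}\epsilon_{Q'}$, whereas the paper asserts the stated form with coefficients $\eta$ and $\eta(1-\eta)$ as an equality without displaying the computation, so this looks like a typo rather than something you failed to reconcile. Second, your ``main obstacle'' --- that the fitted function is a projection of the pseudo-target onto the non-convex mixture class, a class which contains $m_Q$ itself, so the unconstrained risk identity does not automatically lower-bound the constrained minimizer's target risk --- is exactly the step the paper does not rigorously resolve: its proof ends with the one-line claim that ``by the assumption on $\Theta$'' the minimizer lies within a ball of radius $r^2$ of the pseudo-target, where neither $r$ nor any such assumption on $\Theta$ appears in the proposition statement. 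Your proposal is therefore at the same level of completeness as the published proof, and you have correctly isolated its weakest point.
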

Proposition \ref{prop: impossibility} shows that within our framework, weak training does not satisfy the first criteria of weak to strong generalization (it does not produce a consistent estimator of the target function). This inconsistency within our framework is also prevalent empirically; for example, in Figure \ref{fig:plot} GPT-3.5-Turbo does not successfully learn any mathematical reasoning ability from the weak models. While Proposition \ref{prop: impossibility} covers the regression case, the intuition carries over to the HMM case. Both the source and weak target data are biased, and thus constructing a consistent estimator by mixing the two is generally not possible.
\subsection{Label Refinement}
Simultaneously, \citet{somerstep2024transferlearningframeworkweaktostrong} and \citet{yang2024weaktostrongreasoning} introduced a set of methods which feed the weakly labelled data as auxiliary information to the source model and then \emph{drawing a new label for each query in the target data set}. For example, in both \citet{somerstep2024transferlearningframeworkweaktostrong} and \citet{yang2024weaktostrongreasoning} the first step of the refinement procedure is to draw new labels $\hat{Y}$ that satisfy
\begin{equation}\textstyle
    \textstyle \mathbb{P}(\hat{Y}=y|X=x) \deq \mathbb{E}_{V} \mathbb{P}_P(Y=y|X=x, V); ~~ V \sim \prod_{j=1}^M Q_{Y'|X_j}\,.
\end{equation}
In other words, for each query $x$, they collect $m$ weakly labelled pairs $(x_j, y'_j)_{j=1}^m$ drawn from $Q_{X, Y'}$ (\ie produced by the weak model), form ``auxiliary information" $v = [x_1, y'_1, \ldots, x_m, y'_m]$ and then draw a new label from the source model that is fed with both $v$ and $x$. The notation $\mathbb{P}_P(\cdot)$ emphasizes that the refined labels are being drawn from the source distribution. 

 Following \citet{somerstep2024transferlearningframeworkweaktostrong, yang2024weaktostrongreasoning} we utilize our framework to study refined data $(X, \hat{Y})$ with the following distribution:
\[\textstyle
X \sim Q_X; \quad p(\hat{y}|x) \triangleq {{q}}_{\text{re}}(y\mid x) =  \int_{V}p(y\mid x, v)dQ(v\mid x).
\]
It is also possible to perform label refinement by correcting one weak label at a time \citep{somerstep2024transferlearningframeworkweaktostrong}, in this case we consider a refined distribution of the form ${{q}}_{\text{re}}(y\mid x) =  \int p(y\mid x, y')dQ(y'\mid x)$


\subsubsection{Irreducible error of label refinement}

In this subsection, we ask whether for a given query $x$,  the refined label is a good stand-in for a hypothetical gold-standard  label drawn from the target distribution. To begin, we work $\text{q}_{\text{re}}(y|x)$ into a more interpretable form.
\begin{proposition}
    \label{prop: refinement form}
    For $\Pr\{k\mid X,k'\} = \int_{V}p(k\mid x,v)q(v|x, k')dv$, which is an entry of the (conditional-on-$X$) confusion matrix of the prediction $k$ (of true class $k'$ from $V$), one can write 
    \begin{align*}
        {q}_{\text{re}}(y|x) = \sum_k p(y|x, k)\hat{q}(k|x) ~~ \text{and} ~~~
        \hat{q}(k|x) = \sum_{k'} q(k'|x) \Pr\{k\mid X,k'\}\,.
    \end{align*}
\end{proposition}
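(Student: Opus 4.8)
The plan is to obtain both identities by applying the law of total probability over the latent concept twice — once under the source distribution to expose the refined label's dependence on $K$, and once under the target distribution to express the induced latent posterior as a confusion-matrix average — with the two expansions bridged by the conditional independence $Y \perp V \mid (X,K)$ read off from the graph of Assumption~\ref{ass: graph}.

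First I would start from the definition $q_{\text{re}}(y\mid x)=\int_V p(y\mid x,v)\,dQ(v\mid x)$ and decompose the source conditional $p(y\mid x,v)$ over the finite latent alphabet,
\begin{align*}
p(y\mid x,v)=\sum_k p(y\mid x,v,k)\,p(k\mid x,v).
\end{align*}
Since $V$ is assembled from weak-label variables, which are children of $(X,K)$ and hence $d$-separated from $Y$ given $(X,K)$, the (global) Markov property of Assumption~\ref{ass: graph} gives $p(y\mid x,v,k)=p(y\mid x,k)$. Substituting and interchanging the finite sum with the integral (legitimate because $k$ ranges over finitely many values) yields
\begin{align*}
q_{\text{re}}(y\mid x)=\sum_k p(y\mid x,k)\underbrace{\int_V p(k\mid x,v)\,dQ(v\mid x)}_{=:\,\hat q(k\mid x)},
\end{align*}
which is the first claimed identity, together with the definition of $\hat q(k\mid x)$.

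Next, to rewrite $\hat q(k\mid x)$ I would peel off the target latent variable. Writing $dQ(v\mid x)=q(v\mid x)\,dv$ and applying the law of total probability under the target this time, $q(v\mid x)=\sum_{k'} q(v\mid x,k')\,q(k'\mid x)$, gives after another sum--integral interchange
\begin{align*}
\hat q(k\mid x)=\sum_{k'} q(k'\mid x)\int_V p(k\mid x,v)\,q(v\mid x,k')\,dv=\sum_{k'} q(k'\mid x)\,\Pr\{k\mid X,k'\},
\end{align*}
exactly matching the stated $\Pr\{k\mid X,k'\}=\int_V p(k\mid x,v)\,q(v\mid x,k')\,dv$ and completing the derivation. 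Note the deliberately mixed notation — the source posterior $p(k\mid x,v)$ weighted against target generation $q(v\mid x,k')$ — is precisely what makes $\hat q(k\mid x)$ differ from the true target $q(k\mid x)$, which is the source of the irreducible error analyzed next.

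I expect the only genuinely delicate step to be the conditional independence $p(y\mid x,v,k)=p(y\mid x,k)$. In the one-label-at-a-time variant $v=y'$ this is immediate, since $Y$ and $Y'$ are both children of $(X,K)$ and thus $d$-separated given $(X,K)$. In the full version $v=[x_1,y'_1,\dots,x_m,y'_m]$ one must verify that conditioning on $(X,K)$ really blocks every path from the auxiliary block to the freshly drawn $Y$ — i.e., that the auxiliary queries influence $Y$ only through the source model's inferred concept — which is exactly the structure encoded by the graph in Figure~\ref{fig:graph}; I would state this $d$-separation explicitly as the one nontrivial input. The remaining manipulations (two total-probability expansions and the finite sum--integral swaps) are routine.
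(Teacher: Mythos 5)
Your proof is correct and follows essentially the same route as the paper's: decompose over the latent concept using the conditional independence $p(y\mid x,v,k)=p(y\mid x,k)$, expand $q(v\mid x)=\sum_{k'}q(v\mid x,k')\,q(k'\mid x)$ by total probability under the target, and interchange the finite sum with the integral. The only cosmetic differences are the order of the two expansions (the paper, working with the single-label variant $v=y'$, expands the target mixture first and cancels $y'$ second) and that you are more explicit than the paper in justifying the conditional-independence step via $d$-separation in the graph of Assumption~\ref{ass: graph}.
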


For the case of correcting one weak label at a time, one can simply replace $v$ with $y'$.

Proposition \ref{prop: refinement form} reveals the statistical intuition for refinement methods in weak to strong generalization: The model internally updates its prior on the latent variable when fed the weak data as auxiliary information. This essentially transfers the problem of estimating $q(x)$ from weak data to inferring the marginal of the latent variable $K$ from pairs $(X, V)/ (X, Y')$. Unfortunately, Proposition \ref{prop: refinement form} also reveals that even refinement does not produce labels that match hypothetical gold standard labels, as if $\hat{q}(k|x) \neq q(k|x)$ then ${q}_{\text{re}}(y|x) \neq q(y|x)$. 

\begin{example}[ICL Refinement]
  Consider the HMM set-up of Example \ref{example Xie}, with $V  \sim \prod_{j=1}^M Q_{Y'|X_j}$ and taking $q(k) = \mathbf{1}\{k=k^*\}$ for simplicity. We may then write
  \begin{align}
      \hat{q}(k|x) \propto \mathbb{E}_{V} p(k, x, v) \approx \pi_k^p  \prod_{j=1}^M \mathbb{E}_{Q_{y'|x_j}} \Big[\sum_h p(y'|\tau, h)p(h|x, \theta_k) p(x| \theta_k))\Big]
  \end{align}
  The approximate treatment of the ICL pairs as $\iid$ is justified by the use of additional delimiter tokens in $V$. For simplicity, we take this as a given and refer to \citet{xie2021Explanation} for a detailed discussion.

  We also see that for a given $k \neq k^*$, $\hat{q}(k|x) > 0$ unless the expected likelihood of the sequence of tokens $x_1, y'_1, \ldots, x_m, y'_m$ is zero under the transition matrix $\theta_k$. Thus, under general conditions, the labels $\hat{y}$ will still incur some bias from weighting towards the incorrect concepts, leading to irreducible error.
\end{example}
\begin{example}[Weak label improvement]
\label{ex: WLI}
    The authors of \citet{somerstep2024transferlearningframeworkweaktostrong} also propose a refinement procedure in which the source model ``corrects" the weak label for each data pair $(x, y')$. We return to the regression setting (Example \ref{ex: wang}) and again assume that $q(k) = \mathbf{1}\{k = k^*\}$. We can model this refinement step as
     \begin{align*}
      \textstyle  \hat{q}(k|x) = \int_{\cY'}p(k\mid x,y')q(y'|x, k^*)dy'
     \end{align*}
     In Appendix \ref{a:proofs} we show that if we let $\Delta^2_k(x) = (\mu(x; \theta^{w_p}_{k}) - \mu(x; \theta^{w}_{k^*}))^2$, then it holds that 
    \begin{align*}
    \hat{q}(k|x) \lesssim p(k|x)e^{-c \Delta_k^2(x)}.\end{align*}
\end{example}
We see that for $x$ in which $\Delta_{k}^2(x)$ is large for all $k \neq k^*$, the label bias can be drastically reduced. If, on the other hand, it turns out that $\Delta_{k^*}^2(x)$ is large due to the distribution shift in $y'|x$, then label refinement will suffer from a large bias and still have undesirable irreducible error. These examples are evident in Figure \ref{fig:graph}, where we see that label refinement improves on weak training, but does not compare to an oracle proxy.
\subsection{Weak to strong generalization through latent concept identification}
We have seen that within the latent concept transfer framework, both weak training and label refinement produce biased estimates of the target function. The question remains if there exists a procedure that produces a consistent estimate of the target function. To answer this in the affirmative, we propose a two-step procedure: first identify the latent mixture components and mixture proportions in $P$ and $Q$, then construct the target function by borrowing mixture proportions from $Q$ and the mixture components from $P$. We lay this out in Algorithm \ref{alg:identification}.

\begin{algorithm}[H]
\caption{Latent Concept Identification}\label{alg:identification}
\begin{algorithmic}[1]
\Require Source data $\cD_P: \{x_i, y_i, y'_i\}$ target data $\cD_{Q'}: \{x_i, y'_i\}$, maximum likelihood estimation procedure \texttt{MLE}
\State Compute $\{\hat{\theta}_k, \hat{\theta}^{w_p}_k, \hat{\eta}_k, \hat{\pi}_k^p \}_{k=1}^K \gets \texttt{MLE}(\cD_P)$ 
\State Compute $\{\hat{\theta}^{w_q}_k, \hat{\pi}_k^q \}_{k=1}^K \gets \texttt{MLE}(\cD_{Q'})$
\State Compute assignment $a(k) = \argmin_{k'}d^2(\hat{\theta}^{w_p}_k, \hat{\theta}^{w_q}_k)$
\State Compute final estimate $\hat{q}(x) \gets \sum_k \nicefrac{\hat{\pi}_{a(k)}^q g(x|\hat{\eta}_{a(k)})}{[\sum_{k'}\hat{\pi}_{a(k')}^q g(x|\hat{\eta}_{a(k')})]} \varphi(y;x, \hat{\theta}_k)$
\end{algorithmic}
\end{algorithm}
The success of this strategy hinges on the $\texttt{MLE}$/identification step, we discuss this further in Section \ref{sec: identification}. We briefly comment on possible relaxation on observing $Y'$ over $P$ for the success of Algorithm \ref{alg:identification}. In particular, note that $\hat{\theta}_k^{w_p}$ does not play a role in the final estimate; rather it is only used to compute the assignments. With no weak source samples available, one could instead opt to utilize $\theta_k$ in the assignment step.
\section{Identification and estimation under latent concept shift and weak supervision} 
\label{sec: identification}
In this section, we study the question: \emph{When are the parameters of $Q_{Y|X}$ identifiable?} To formalize identifiability, consider the space of measures $\Lambda(\cX \times \cY )$, and the space of measures on $\Lambda(\cX \times \cY )$, which we denote as $\Lambda^2(\cX \times\cY)$. For any mixture density specified by $f(y, x; \pi, \eta, \theta, K) \equiv \sum_{k=1}^K \frac{\pi_k g(x|\eta_k)}{\sum_{k'=1}^K\pi_{k'} g(x|\eta_{k'})}\varphi(y; x, \theta_k)$, we can define the \emph{mixing measure} $F_*(\eta, \theta, K) \in \Lambda^2(\cX \times\cY)$ as the weighted sum of Dirac measures $F_*(\eta, \theta) = \sum_{k=1}^K \pi_k \delta_{\eta_k, \theta_k} $. As in prior work \citep{aragam2018Identifiability}, we say that a mixture family $f(x; \vec{\eta}, \vec{\theta})$ is identifiable if the map 
\begin{align*}
    \Pi: \Lambda(\cX \times\reals) \rightarrow \Lambda^2(\cX \times\reals): \quad \Pi(f(x; \pi, \eta, \theta)) \rightarrow F_*(\eta, \theta)
\end{align*}
is injective. 
Recall our goal is learning $q(y|x) = \sum_k \frac{\pi_k^q g(x|\eta_k)}{\sum_{k'}\pi_{k'}^q g(x|\eta_{k'})} \varphi(y; x, \theta_k)$.
The necessary ingredients to identifying $q(y|x)$ are: 
\begin{enumerate}
   \item Identifiability of the mixture system. Note this property is dependent on the functions $\varphi(\cdot, \cdot, \cdot)$ and $g(\cdot, \cdot)$.
    \item A matching identifiability condition between $\{\theta_k^{w_p}\}$ and $\{\theta_k^{w_q}\}$.
\end{enumerate}
Our first contribution is to characterize the identifiability of our general mixture model; this result generalizes those in \citet{nguyen2024squareestimationsoftmaxgating}.
\begin{definition}
    We say that the functions $\varphi(\cdot, \cdot, \cdot)$ $g(\cdot, \cdot)$ are zero'th order algebraically independent if for any collection of distinct parameters $\{\theta_{k}\}$, $\{\eta_k\}$, the collection of functions in $x$ $\{g(x;\eta_k)\}$ is linearly independent and the collection of functions in  $\{\varphi(y; x, \theta_{k})\}$ in $x, y$ is linearly independent. 
\end{definition}
\begin{proposition}
\label{prop: identification}
     Suppose that  $\varphi(\cdot, \cdot, \cdot)$, and $g(\cdot, \cdot)$ are (zero'th order) algebraically independent. Then the mixture family $f(x; \vec{\eta}, \vec{\theta})$ is identifiable.
\end{proposition}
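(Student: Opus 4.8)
The goal is to show that the map $\Pi$ sending a mixture density $f(x;\vec\eta,\vec\theta)=\sum_{k=1}^K \frac{\pi_k g(x|\eta_k)}{\sum_{k'}\pi_{k'}g(x|\eta_{k'})}\varphi(y;x,\theta_k)$ to its mixing measure $F_*=\sum_k \pi_k\delta_{\eta_k,\theta_k}$ is injective. Concretely, I must show that if two parameter collections $(\vec\pi,\vec\eta,\vec\theta,K)$ and $(\vec{\tilde\pi},\vec{\tilde\eta},\vec{\tilde\theta},\tilde K)$ induce the same density $f\equiv\tilde f$ for (almost) all $(x,y)$, then they induce the same mixing measure, i.e.\ $K=\tilde K$ and the atoms coincide up to relabeling. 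The plan is to exploit the product structure of each mixture component: the gating weight depends only on $x$ through $g(x|\eta_k)$, while the expert $\varphi(y;x,\theta_k)$ carries the $y$-dependence, and the two algebraic independence hypotheses let me peel these apart one at a time.

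The first step is to clear the softmax denominator. Since $\sum_{k'}\pi_{k'}g(x|\eta_{k'})$ depends only on $x$, multiplying the identity $f(x,y)=\tilde f(x,y)$ through by both denominators yields a \emph{polynomial-free} identity of the form $\sum_k \pi_k g(x|\eta_k)\varphi(y;x,\theta_k)\cdot D(x) = \sum_j \tilde\pi_j g(x|\tilde\eta_j)\varphi(y;x,\tilde\theta_j)\cdot \tilde D(x)$ where $D,\tilde D$ are the respective denominators; rearranging, I get a single vanishing linear combination of terms of the form $g(x|\eta_a)g(x|\eta_b)\varphi(y;x,\theta_c)$. The second step is to fix $x$ (generic) and view the resulting relation as a linear dependence among the functions $\{\varphi(y;x,\theta_c)\}$ in the variable $y$. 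By the zero'th-order algebraic independence of $\varphi$, these functions are linearly independent, so for each distinct expert parameter the matching coefficient (an expression in the $g$'s and $\pi$'s) must vanish identically in $x$. This forces the expert parameters on the two sides to match as sets: every $\theta_c$ appearing on the left must equal some $\tilde\theta$ on the right, and conversely. The third step then feeds these matched-$\theta$ coefficient identities back into the linear independence of $\{g(x|\eta_k)\}$ in $x$ to conclude that the gating parameters $\eta_k$ also match, and finally that the mixing weights $\pi_k$ agree, after which $K=\tilde K$ follows by counting distinct atoms.

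I expect the main obstacle to be the bookkeeping in the second step, specifically disentangling the \emph{coupled} $(\eta,\theta)$ pairing from the cleared-denominator identity. After clearing denominators the surviving coefficients are not single $g$-evaluations but products/differences of them (because the denominator of one side multiplies the numerator of the other), so I must be careful that the ``linear independence in $y$'' argument isolates terms indexed by a single $\theta$ before I can invoke linear independence in $x$ for the $g$'s. The cleanest route is a two-stage induction: first use $\varphi$-independence to partition all terms by their $\theta$-value and equate the $x$-dependent coefficient functions $\theta$-block by $\theta$-block, then within each block use $g$-independence to match the $\eta$'s and the $\pi$'s. A subtlety worth flagging is handling the case where the same $\theta$ value is shared by several components with distinct $\eta$'s (or vice versa); the definition of the mixing measure as a sum of Diracs on the \emph{joint} $(\eta,\theta)$ space means I must recover the full joint atom, so after matching $\theta$-blocks I should argue that within a block the $g$-independence recovers the distinct $\eta$'s together with their weights, completing the identification of $F_*$.
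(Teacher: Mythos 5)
There is a genuine gap, and it is not the bookkeeping issue you flagged: your argument works entirely with the softmax-gated conditional density $f(y\mid x)$ and never uses equality of the $X$-marginal, but the conditional alone does not determine the mixing measure under the stated hypotheses. Concretely, take Gaussian gating $g(x\mid\eta_k)=\mathcal{N}(x;\eta_k,1)$, any experts $\{\theta_k\}$ satisfying the $\varphi$-independence condition, and consider the shifted parametrization $\tilde\eta_k=\eta_k+d$, $\tilde\theta_k=\theta_k$, $\tilde\pi_k\propto\pi_k e^{\eta_k d}$ (normalized). A direct computation gives $\tilde\pi_k\, g(x\mid\tilde\eta_k)=c(x)\,\pi_k\, g(x\mid\eta_k)$ with $c(x)=e^{xd-d^2/2}/\sum_j\pi_j e^{\eta_j d}$ independent of $k$, so every gating weight --- and hence the entire conditional density --- is unchanged, while the mixing measure $\sum_k\pi_k\delta_{(\eta_k,\theta_k)}$ has moved. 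Both zero'th order independence hypotheses hold here (finitely many Gaussians with distinct means are linearly independent). This counterexample is exactly where your Step 3 breaks: after clearing denominators and matching coefficients of $\varphi(y;x,\theta_k)$ in $y$, you are left with identities of the form $\pi_k\, g(x\mid\eta_k)\,\tilde D(x)=\tilde\pi_k\, g(x\mid\tilde\eta_k)\, D(x)$, which are linear relations among \emph{products} of $g$'s; the hypothesis gives linear independence only of the $g$ family itself, not of pairwise products, and as the example shows these product identities genuinely admit solutions with $\eta_k\neq\tilde\eta_k$.

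The paper's proof closes this hole by reading the equality at the level of the joint law of $(X,Y)$, whose $X$-marginal is the \emph{ungated} mixture $\sum_k\pi_k\, g(x\mid\eta_k)$ (this is what the map $\Pi$ is implicitly defined on). Equality of marginals plus linear independence of the combined family $\{g(\cdot\mid\eta_k)\}\cup\{g(\cdot\mid\tilde\eta_k)\}$ forces $\pi_k\, g(\cdot\mid\eta_k)=\tilde\pi_k\, g(\cdot\mid\tilde\eta_k)$ up to permutation; integrating in $x$ gives $\pi_k=\tilde\pi_k$ and then $\eta_k=\tilde\eta_k$. Once the gating side is pinned down, the two softmax denominators agree and cancel, and the conditional equality reduces to $\sum_k\pi_k\, g(x\mid\eta_k)\bigl(\varphi(y;x,\theta_k)-\varphi(y;x,\tilde\theta_k)\bigr)=0$, where $\varphi$-independence yields $\theta_k=\tilde\theta_k$ (the paper settles $K=\tilde K$ first by a similar unmatched-parameter argument). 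So the fix is structural, not cosmetic: you must inject the marginal equality and identify the gating parameters \emph{before} touching the experts; starting from the conditional and trying to recover the gating afterwards cannot work, because the statement you would need at that stage is false.
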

Proposition \ref{prop: identification} establishes a sufficient condition for the parameters $\vec{\eta}, \vec{\theta}, \vec{\pi} $ being identifiable from observing data $(X,Y/Y')$ from $P$ or $Q$. The algebraic independence condition ensures that distinct sets of the parameters cannot lead to the same distribution over $P$ and $Q$. 
\begin{example}
    In Example \ref{ex: wang}, the density of $f(x, \theta, \epsilon)$ needs to satisfy algebraic independence; in Example \ref{ex: somerstep} it is sufficient that the $\{\beta_k\}$ be linearly independent.
\end{example}
In the case of Example \ref{example Xie}, a discussion on when the necessary algebraic independence condition is met is more complex. 
\begin{proposition}
\label{prop: HMM-ID}
    Consider the HMM model setup of Example \ref{example Xie} and in \citet{xie2021Explanation}. Following their notation, suppose some additional assumptions hold:
    \begin{enumerate}
        \item The state space $\cH$ is finite, the tokens $x^j$ and $y$ also lie in a finite token space $\cO$. Emission sequences $x^{1}, x^2, \ldots $ from the mixture of hidden markov models may be arbitrarily long. 
        \item For value $\tau$, there exists set of tokens $(o^*_1, \dots, o_H^*)$, $o^*_h \in \cO$ such that $p(o^*_h|h, \tau) >0 $ and $p(o^*_{h'}|h, \tau) = 0 $.
        \item $p(y;h,\tau)$ satisfies the algebraic independence condition in $\tau$.
    \end{enumerate}
    Then $q(x,y')$ and $p(x,y)$ satisfy the algebraic independence conditions in $\theta$, $\pi$, and $\tau$.
\end{proposition}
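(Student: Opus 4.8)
The plan is to verify the algebraic independence condition of Proposition~\ref{prop: identification} directly for the HMM mixture densities appearing in Example~\ref{example Xie}. Recall that algebraic independence requires two things: first, that the collection of functions in $x$ given by the marginal likelihoods $\{g(x;\theta_k)\}$ (here the HMM emission likelihoods of the prefix $x = (x^1,\dots,x^l)$ under transition matrix $\theta_k$) be linearly independent; and second, that the joint densities $\{\varphi(y;x,\tau)\}$ (equivalently $\{p(y\mid x,\tau)p(x\mid\theta_k)\}$) be linearly independent in $(x,y)$. Since $\theta$, $\pi$, and $\tau$ enter the density through distinct factors, I would separate the argument into establishing linear independence in the transition parameter $\theta$ and in the emission parameter $\tau$, then combine them.

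First I would handle the $\theta$-component. Using assumption~1, because emission sequences may be arbitrarily long and the state space $\cH$ and token space $\cO$ are finite, I would represent each HMM likelihood $p(x^1,\dots,x^l\mid\theta_k)$ via its transfer-matrix (forward) product, so that the likelihood of a length-$l$ sequence is a bilinear form $\mathbf{1}^\top B(x^l)\theta_k\cdots B(x^1)\theta_k\,\alpha$ in powers of $\theta_k$. A hypothetical linear dependence $\sum_k c_k\, p(x^{1:l}\mid\theta_k)=0$ for all sequences of all lengths then forces the generating functions (or equivalently all the sequence moments) of the distinct $\theta_k$ to coincide; I would invoke distinctness of the $\theta_k$ together with the arbitrary-length hypothesis to conclude all $c_k=0$. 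The key technical device here is that distinct irreducible transition matrices produce distinct spectral/sequence signatures that cannot cancel across finitely many mixture components when sequences of unbounded length are available.

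Second I would use assumptions~2 and~3 to pin down the emission/final-token factor. Assumption~2 supplies, for each hidden state $h$, a token $o^*_h$ that is emitted \emph{only} from $h$ (a ``pure'' or identifying token), which lets me read off the hidden-state occupancy from observed emissions and thereby disentangle the emission parameter $\tau$ from the transition dynamics; this is exactly the standard HMM-identifiability trick that breaks the label-switching/mixing ambiguity between states. Assumption~3, that $p(y;h,\tau)$ is itself algebraically independent in $\tau$, then gives linear independence of the final-label factor $\{\varphi(y;x,\tau)\}$ once the state has been localized. Combining the independence in $\theta$ with the independence in $\tau$ via the product structure of the HMM factorization yields that the joint collection is linearly independent, which is the zero'th-order algebraic independence required; applying Proposition~\ref{prop: identification} then gives identifiability of $q(x,y')$ and $p(x,y)$ in $\theta$, $\pi$, and $\tau$.

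I expect the main obstacle to be the first step: rigorously converting ``arbitrarily long emission sequences plus distinct transition matrices'' into genuine linear independence of the likelihood functions, since mixtures of HMMs are notoriously delicate and a naive moment-matching argument can fail when two transition matrices share spectral data. The cleanest route is probably to reduce to linear independence of the transfer-operator products and argue that a nontrivial annihilating combination would have to vanish identically as a function on the free monoid over $\cO$, contradicting distinctness; I would lean on the pure-token structure from assumption~2 to make the occupancy measurements explicit and thus make the cancellation impossible. Handling label-switching symmetry carefully (so that ``distinct parameters'' is interpreted up to the inherent permutation of states) will be the remaining bookkeeping.
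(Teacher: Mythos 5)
Your high-level skeleton does match the paper's: use the anchor tokens of assumption~2 to decouple the sum over hidden states and pin down the hidden path (so that mixture likelihoods of anchor-token strings reduce to path probabilities under each $\theta_k$), then exploit arbitrarily long sequences to separate distinct transition matrices, with assumption~3 handling the emission factor in $\tau$. But the step you yourself flag as the main obstacle --- converting ``distinct transition matrices plus unbounded sequence length'' into genuine linear independence of the likelihood functions --- is exactly the step you never supply, and it is the heart of the proof. Appealing to ``distinct spectral/sequence signatures'' is not an argument: as you note, spectral data can coincide for distinct matrices, and your fallback (``a nontrivial annihilating combination would have to vanish identically on the free monoid over $\cO$, contradicting distinctness'') is circular, because vanishing of the combination on all strings is the \emph{hypothesis} of linear dependence, and distinctness of the parameters does not by itself forbid it --- that implication is precisely what must be proven. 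You also quietly assume irreducibility of the $\theta_k$, which the proposition does not.

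The paper closes this gap with a concrete combinatorial device (Lemma~\ref{lem: cycle}): for any two distinct stochastic matrices $\theta,\theta'$ on a finite state space there exists a \emph{cycle} $h_{i_1}\cdots h_{i_m}h_{i_1}$ whose probability is strictly larger under $\theta$ than under $\theta'$; its proof is a finite pigeonhole chase using the fact that rows sum to one, with no spectral or irreducibility hypotheses. One then takes $x$ to be the corresponding string of anchor tokens repeated $l$ times; by assumption~2 the hidden path is forced, so the component-$k$ likelihood is proportional to $\bigl(\text{cycle probability under }\theta_k\bigr)^l\pi_k$, and letting $l\to\infty$ the strictly different exponential growth rates make cancellation across components impossible for every $l$ (a Vandermonde-type argument on geometric sequences). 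The case of identical transition collections but different priors is handled by the same device to force $\pi_k=\tilde\pi_k$. Without this lemma, or an equally explicit replacement that turns distinctness into a strict inequality amplifiable by repetition, your proposal does not go through.
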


Assumption 2 of Proposition \ref{prop: HMM-ID} is fairly strong; in the topic model literature, the special tokens $o^*_h$ are referred to as \emph{anchor words} \citep{arora2012learningtopicmodels} and in hyperspectral mixing they are referred to as \emph{pure pixels} \citep{JASMINE2018515}. Anchor words play an important role in the identification of the parameters of topic models.


Unfortunately, because we do not observe $X, Y \sim Q$, to identify $q(y|x)$ we require more than just identifiability of the individual components. At its heart, we need to match the target prior $\pi^q$ observed from $Q'$ to the source components $\eta, \theta$ observed from $P$, keeping in mind that each of these is identifiable only up to permutation. To assure this is possible, we impose a statistical distinguishability condition on $p(y'|x, k)$ and $q(y'|x, k)$.
\begin{assumption}[identifiability over permutation]
\label{ass: perm}
Let $[K] = \{1, \ldots, K\}$. For all $k, [K]$ and $\Delta>0$ we assume: (i)  $\argmin_{k' \neq k} ||\theta_k^{w_q}-{\theta}_{k'}^{w_p}||^2 >c+\Delta$, and (ii) $||\theta_k^{w_q}-{\theta}_{k}^{w_p}||^2<c$.
\end{assumption}

\subsection{Consistency of algorithm 1}
Recall our estimator desiderata are  
(i) the weak to strong estimator should asymptotically converge to the target (as the number of weak and source samples grows) and (ii) an estimator using one source of data should not asymptotically converge to the target. Under assumptions \ref{ass: graph} - \ref{ass: GMOE} (ii) clearly holds. With no alteration, the source model $P(Y|X)$ clearly does not match $Q(Y|X)$ since $P(k|X) \neq Q(k|X)$. Likewise $Q(Y'|X, k) \neq Q(Y|X, k)$ so long as $\theta^{w_q}_k \neq \theta_k$. We partially establish (i) in the following proposition.

\begin{proposition}
\label{prop: convergence}
Suppose that the $\texttt{MLE}$ step satisfies the following: For some constants $C>0$, and $\alpha \in [0,1]$ it holds that
\begin{align*}
\mathbb{P}(||\theta-\hat{\theta}||^2 > [\nicefrac{\log n_P }{n_P}]^\alpha )\lesssim \text{exp}(-C \log n_P); \quad \theta \in \{{\theta}_k, {\theta}^{w_p}_k, {\eta}_k, {\pi}_k^p\}_{k=1}^K\,,
\end{align*} 
\begin{align*}
\text{and} ~~~~\mathbb{P}(||\theta-\hat{\theta}||^2 > [\nicefrac{\log n_{Q'} }{n_{Q'}}]^\alpha )\lesssim \text{exp}(-C \log n_{Q'}); \quad \theta \in \{ {\theta}^{w_q}_k, {\pi}_k^q  \}_{k=1}^K\,.
\end{align*}
Then, if $\hat{\theta}_{a(k)}$ denotes the parameter estimates produced by Algorithm \ref{alg:identification} and so long as $\Delta > \max\{\cO([\nicefrac{\log n_P}{n_P}]^{\alpha}), \cO([\nicefrac{\log n_Q}{n_{Q'}}]^{\alpha})\}$ then it holds that
\begin{align*}
    ||\hat{\theta}_{a(k)} - \theta_{k}|| \lesssim \min\{n_P, n_{Q'}\}^{-\frac{\alpha}{2}}\,.
\end{align*}
\end{proposition}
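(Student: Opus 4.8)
The plan is to split the error of $\hat\theta_{a(k)}$ into the intrinsic statistical error of the MLE outputs and the event that the assignment step $a(\cdot)$ matches the source and target labelings incorrectly. First I would convert the two per-parameter tail bounds into a single good event. Because each MLE returns only $\cO(K)$ parameters, a union bound over them preserves the $\exp(-C\log n)$ tails, so on an event $E$ of probability at least $1-\cO(\exp(-C\log n_P))-\cO(\exp(-C\log n_{Q'}))$ every source estimate satisfies $\|\theta-\hat\theta\|\le[\nicefrac{\log n_P}{n_P}]^{\alpha/2}$ and every target estimate satisfies $\|\theta-\hat\theta\|\le[\nicefrac{\log n_{Q'}}{n_{Q'}}]^{\alpha/2}$, each up to the internally consistent permutation returned by its own MLE.

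The crux of the argument, and the step I expect to be the main obstacle, is showing that on $E$ the assignment $a(\cdot)$ is exactly correct, i.e.\ it pairs each target index with the source index of the same latent concept. Here I would invoke Assumption \ref{ass: perm}: matched weak parameters obey $\|\theta_k^{w_q}-\theta_k^{w_p}\|^2<c$ while every mismatched pair obeys $\|\theta_k^{w_q}-\theta_{k'}^{w_p}\|^2>c+\Delta$. The assignment compares the empirical squared distances $d^2(\hat\theta_{k'}^{w_p},\hat\theta_k^{w_q})$, and by the triangle inequality the MLE concentration perturbs each such squared distance by at most the same order as the estimation error. Thus on $E$ the empirical distance of a true match stays below $c+\cO(\cdot)$ while that of any false match stays above $c+\Delta-\cO(\cdot)$, and the hypothesis $\Delta>\max\{\cO([\nicefrac{\log n_P}{n_P}]^{\alpha}),\cO([\nicefrac{\log n_Q}{n_{Q'}}]^{\alpha})\}$ is exactly what keeps this separation from closing. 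Hence the nearest-neighbor $\argmin$ returns the correct index for every $k$, and $a(\cdot)$ is the correct bijection between the two permutations.

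Once the assignment is exact on $E$, the conclusion is immediate: $\hat\theta_{a(k)}$ is then literally the MLE estimate of the correct true parameter $\theta_k$ and inherits the MLE rate. Since the assembled estimate draws its components $(\hat\eta,\hat\theta)$ from the source fit (rate $n_P^{-\alpha/2}$) and its mixing proportions $\hat\pi^q$ from the target fit (rate $n_{Q'}^{-\alpha/2}$), the slower of the two governs, giving $\|\hat\theta_{a(k)}-\theta_k\|\lesssim\max\{n_P^{-\alpha/2},n_{Q'}^{-\alpha/2}\}=\min\{n_P,n_{Q'}\}^{-\alpha/2}$ up to logarithmic factors, and folding in the failure probability of $E$ yields the stated high-probability bound. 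The one bookkeeping point to watch is that the source MLE labels $(\hat\theta_k,\hat\theta_k^{w_p},\hat\eta_k)$ under one permutation and the target MLE labels $(\hat\theta_k^{w_q},\hat\pi_k^q)$ under another; it is precisely the exactness of $a(\cdot)$ established above that reconciles them, which is why the $\Delta$-separation of Assumption \ref{ass: perm} is doing the essential work.
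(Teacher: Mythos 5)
Your proposal is correct and follows essentially the same route as the paper's proof: both arguments reduce the problem to showing that, on the high-probability event where all MLE estimates concentrate, the squared distance for a true match stays below $c+\cO([\nicefrac{\log n}{n}]^{\alpha})$ while any false match stays above $c+\Delta-\cO([\nicefrac{\log n}{n}]^{\alpha})$, so the $\argmin$ in step 3 of Algorithm \ref{alg:identification} recovers the correct bijection whenever $\Delta$ dominates the estimation error, after which $\hat{\theta}_{a(k)}$ simply inherits the assumed MLE rate. Your write-up is in fact slightly more complete than the paper's, which stops after establishing correctness of the assignment and leaves the union bound over parameters and the final $\min\{n_P,n_{Q'}\}^{-\alpha/2}$ bookkeeping implicit.
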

In the case of regression (\cf\ Examples \ref{ex: wang} and \ref{ex: somerstep}) we may not be simply interested in estimating the parameters, but actually obtaining the regression function $\mathbb{E}_{Q}[Y|X]$. Letting $\mu(x; \theta) = \mathbb{E}[Y|X, \theta]$, we may write the function of interest and our estimate $(q(x), \hat{q}(x))$ as
\begin{align*}
  q(x) \triangleq \mathbb{E}_{Q}[Y|x] = \sum_k \frac{\pi_k^q g(x|\eta_k)}{\sum_{k'}\pi_{k'}^q g(x|\eta_{k'})} \mu(x; \theta_k ),~~ 
  \hat{q}(x) \triangleq \sum_k \frac{\hat{\pi}_{a(k)}^q g(x|\hat{\eta}_{a(k)})}{[\sum_{k'}\hat{\pi}_{a(k')}^q g(x|\hat{\eta}_{a(k')})]} \mu(x; \hat{\theta}_{a(k)}).
\end{align*}
\begin{proposition}
\label{prop:regression}
Suppose that the $\texttt{MLE}$ step satisfies the conditions in Proposition \ref{prop: convergence}. Additionally, suppose that the functions $\log g(x|\eta)$ and $\mu(x| \theta)$ are Lipschitz continuous in their respective parameters. Then (up to logarithmic factors) the estimator $\hat{q}(y|x)$ from Algorithm \ref{alg:identification} satisfies 
\[||\hat{q}(x)-q(x)||_{2, Q} \lesssim \min\{n_P, n_{Q'}\}^{-\frac{\alpha}{2}}\]
\end{proposition}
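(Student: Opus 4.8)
The plan is to control $\|\hat q - q\|_{2,Q}$ by a telescoping decomposition that isolates the error in the softmax gating weights from the error in the expert means, and then to invoke Proposition \ref{prop: convergence} to bound each piece by the parameter rate $\min\{n_P,n_{Q'}\}^{-\alpha/2}$. Writing the true and estimated gating weights as $w_k^q(x) = \pi_k^q g(x|\eta_k)/\sum_{k'}\pi_{k'}^q g(x|\eta_{k'})$ and $\hat w_k(x) = \hat\pi_{a(k)}^q g(x|\hat\eta_{a(k)})/\sum_{k'}\hat\pi_{a(k')}^q g(x|\hat\eta_{a(k')})$, so that $q(x)=\sum_k w_k^q(x)\mu(x;\theta_k)$ and $\hat q(x)=\sum_k \hat w_k(x)\mu(x;\hat\theta_{a(k)})$, I would add and subtract $\sum_k w_k^q(x)\mu(x;\hat\theta_{a(k)})$ to obtain
\[
\hat q(x) - q(x) = \underbrace{\sum_k[\hat w_k(x) - w_k^q(x)]\,\mu(x;\hat\theta_{a(k)})}_{\text{(gating error)}} + \underbrace{\sum_k w_k^q(x)\,[\mu(x;\hat\theta_{a(k)}) - \mu(x;\theta_k)]}_{\text{(expert error)}},
\]
then apply the triangle inequality in $L_2(Q)$ and bound the two groups separately.

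The expert-error term is the easier one. Since the $w_k^q(x)$ form a convex combination, Jensen's inequality gives $|\sum_k w_k^q(x)b_k(x)|^2 \le \sum_k w_k^q(x)|b_k(x)|^2$ with $b_k(x)=\mu(x;\hat\theta_{a(k)}) - \mu(x;\theta_k)$; integrating against $Q$ and using the Lipschitz continuity of $\mu$ in $\theta$ bounds this by $\sum_k \|\hat\theta_{a(k)} - \theta_k\|^2$ (times an $L_2(Q)$ norm of the Lipschitz factor), which by Proposition \ref{prop: convergence} is $\lesssim \min\{n_P,n_{Q'}\}^{-\alpha}$, so its contribution to $\|\hat q - q\|_{2,Q}$ is $\lesssim \min\{n_P,n_{Q'}\}^{-\alpha/2}$.

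The gating-error term is where the main work lies, and this is precisely why the hypothesis is stated in terms of $\log g$ (rather than $g$) being Lipschitz. Each $w_k^q(x)$ is a softmax of the logits $z_k(x)=\log\pi_k^q + \log g(x|\eta_k)$, and the softmax map is Lipschitz, so $|\hat w_k(x)-w_k^q(x)| \lesssim \max_{k'}|\hat z_{k'}(x)-z_{k'}(x)|$. Splitting the logit difference gives $|\hat z_k - z_k| \le |\log\hat\pi^q_{a(k)}-\log\pi^q_k| + |\log g(x|\hat\eta_{a(k)})-\log g(x|\eta_k)|$; the first summand is controlled by $|\hat\pi^q_{a(k)}-\pi^q_k|$ provided the proportions stay bounded away from $0$ (so $\log$ is locally Lipschitz), and the second by $\|\hat\eta_{a(k)}-\eta_k\|$ via the Lipschitz continuity of $\log g$. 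Both are $\lesssim \min\{n_P,n_{Q'}\}^{-\alpha/2}$ by Proposition \ref{prop: convergence}, so the gating weights converge at the same rate. Multiplying by $\mu(x;\hat\theta_{a(k)})$ and integrating then yields the claim, once one notes $\|\mu(\cdot;\hat\theta_{a(k)})\|_{2,Q}=O(1)$ because $\hat\theta_{a(k)}\to\theta_k$ and $\mu(\cdot;\theta_k)\in L_2(Q)$.

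I expect the main obstacle to be the uniform control of the softmax perturbation: keeping the denominator $\sum_{k'}\pi_{k'}^q g(x|\eta_{k'})$ and the mixing proportions bounded away from zero so that the $\log$ linearization is valid is exactly where the $\log g$ Lipschitz assumption and an implicit positivity of $\pi^q$ do the work. A secondary subtlety is integrability when $\mu$ is unbounded: the pointwise weight bound must be paired with the $L_2(Q)$ boundedness of $\mu(\cdot;\hat\theta_{a(k)})$ (or a joint integrability condition on the product of the Lipschitz factor of $\log g$ and $\mu$) so that the gating-error term is genuinely $O(\min\{n_P,n_{Q'}\}^{-\alpha/2})$ rather than merely small pointwise. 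The correctness of the label assignment $a(k)$ across $P$ and $Q$ is inherited from Assumption \ref{ass: perm} through Proposition \ref{prop: convergence}, so I would treat it as given.
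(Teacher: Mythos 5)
Your proof is correct and takes essentially the same route as the paper's: after treating the assignment correctness $a(k)=k$ from Proposition \ref{prop: convergence} as given, both arguments reduce the bound to Lipschitz continuity of the softmax-gated mixture in its parameters, via the softmax-of-log-logits representation combined with the Lipschitz hypotheses on $\log g$ and $\mu$ and the parameter rates from Proposition \ref{prop: convergence}. Your gating/expert split is just a product-rule reorganization of the paper's termwise triangle-inequality argument, and the caveats you flag (mixing proportions bounded away from zero, boundedness/integrability of $\mu$) are exactly the boundedness assumptions the paper's proof also invokes when it asserts that $\log \pi g(x|\eta)$ and $\mu(x|\theta_k)$ are ``Lipschitz and bounded.''
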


\begin{example}
    When does \texttt{MLE} satisfy the required convergence assumption? For the case of Examples \ref{ex: wang} and \ref{ex: somerstep}, suppose that $\varphi(y; x, \theta)$ corresponds to the normal density with mean $\mu(\cdot)$ and variance $\sigma(\cdot)$ parametrized by $x$ and $\theta$. \citet{JMLR:v23:20-1129} show that if $\mu(\cdot)$ and $\sigma(\cdot)$ meet a pair of higher-order algebraic independence conditions, then the assumption on $\texttt{MLE}$ holds with $\alpha = 1/4$. We also note that the specification in Example $\ref{ex: somerstep}$ meets the needed independence condition.

    For the more general case, where $g(\cdot)$ is not constant in $x$,  possible rates of $\alpha=1$ or $\alpha=\nicefrac{1}{2}$ are established in \citet{nguyen2024squareestimationsoftmaxgating, nguyen2024sigmoidgatingsampleefficient}. 
\end{example}

\subsection{Empirical Example}
In order to empirically test the findings in the paper, we have adopted a toy empirical setting from \citet{somerstep2024transferlearningframeworkweaktostrong}. The objective is to teach a model a specific persona; we assume that the base model is a mixture of an undesirable and a desirable persona (here the persona corresponds to the concept $k$). We implement Algorithm 1 by clustering data drawn from the source model using K-Means++, computing assignments using cosine similarity, and computing the final estimate by performing supervised fine-tuning on the data from the cluster that is closest to the weak data. 

We utilize Dolly as the training set, with weak labels provided by Falcon 7B, Gemma 2B, llama2 7B, and Mistral 8b. GPT-4o-mini acts as the strong model. At test time, we measure the strong model's accuracy and use of the desirable persona on TinyTruthfulQA and TinyAlpaca Eval \citep{polo2024tinybenchmarks}. Our findings are consistent with the theoretical findings in the draft. In particular, 

\begin{enumerate}
    \item Weak training biases the model by reducing the accuracy at test time (see content scores of Figure \ref{fig:ET})
\item  Refinement does not reduce accuracy but does leave Bias by not fully transferring the persona to the model (see style scores of Figure \ref{fig:ET})
\item The identification procedure produces a model with no ``bias", the persona is learned and the accuracy is not reduced (see both scores of each Figure \ref{fig:ET}).

\end{enumerate}

\begin{figure}[h]
\centering
\begin{tabular}{cc}
    \includegraphics[width=.45\textwidth]{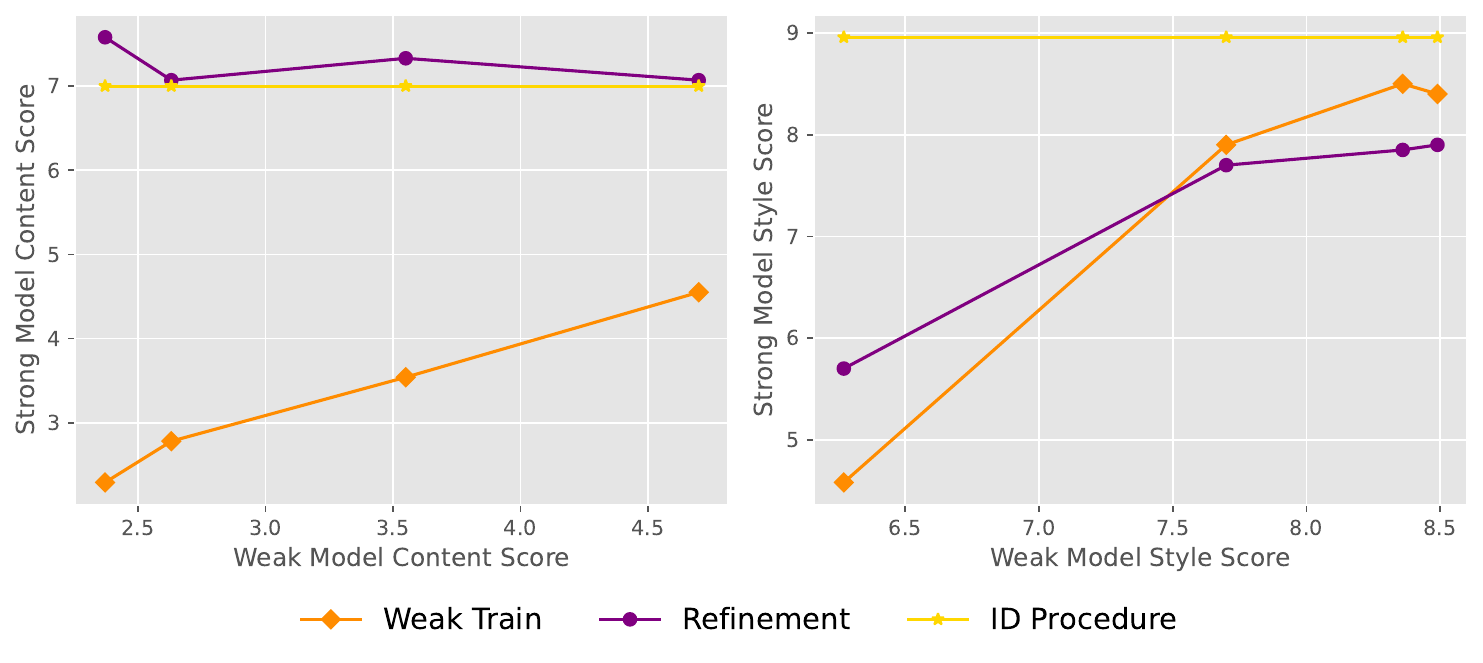}  & \includegraphics[width=.45\textwidth]{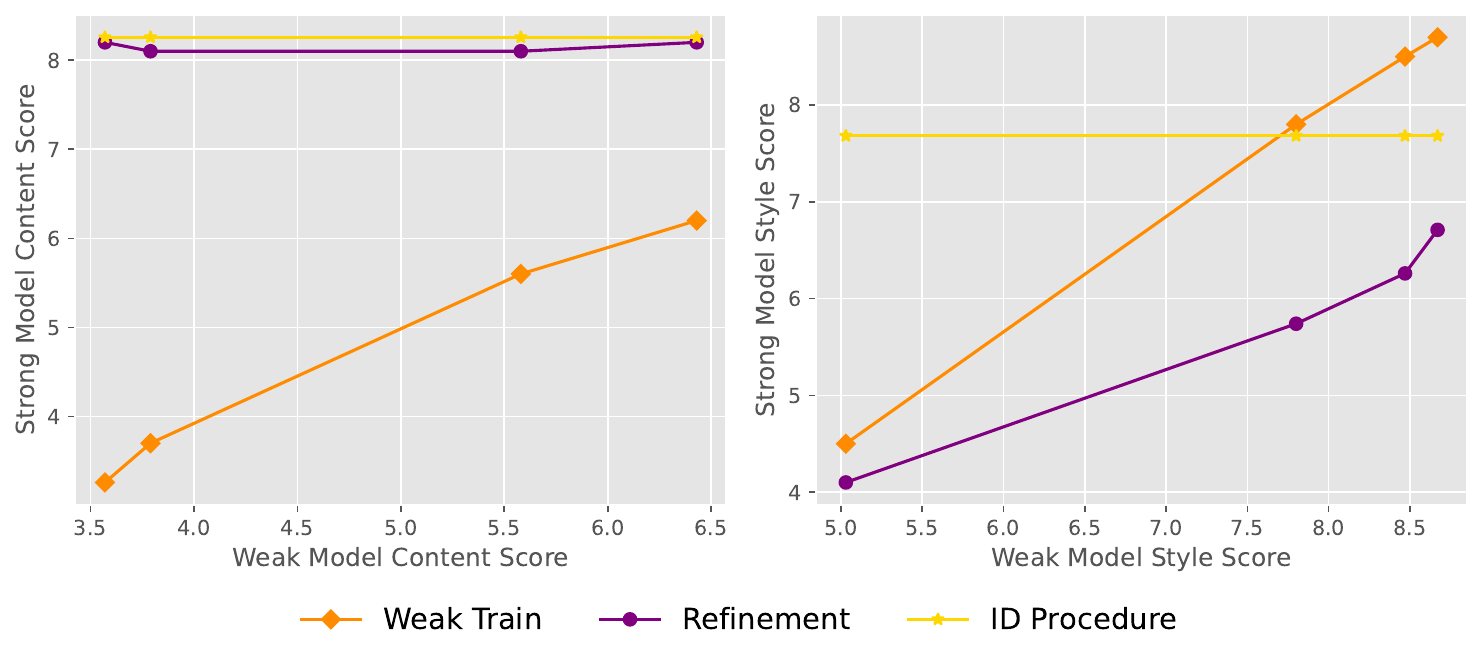} 
\end{tabular}
  \caption{Comparing performance of weak training, refinement and the identification procedure on a persona learning task (TinyTruthfulQA (left) TinyAlpacaEval (right)).}
  \label{fig:ET}
\end{figure}
\section{Conclusion}
In this paper, we have studied two popular classes of learning methods for weak-to-strong generalization: (i) weak training and (ii) label refinement within a general latent concept transfer framework, adopted from  \citet{somerstep2024transferlearningframeworkweaktostrong} that includes popular classes of latent concept models for LLMs \citep{xie2021Explanation, pathak2023Transformers, wang2024largelanguagemodelslatent}. Under the adopted framework, it is shown that there exists a weak-to-strong generalization procedure that produces estimators which asymptotically converge to the target function as the number of unaligned base model and aligned weak model samples grows. This result primarily relies on identification conditions for the aforementioned latent concept models for LLMs. We also show that both weak training and label refinement produce biased estimators of the target function, suggesting that they do not satisfy the consistency property enjoyed by the weak-to-strong generalization algorithm we produce. While both weak training and refinement are tractable, our results suggest the need for further research  into weak-to-strong methodologies which enjoy good theoretical properties and remain implementable for practical language modeling problems.
\bibliography{seamus,YK,extras}

\begin{thebibliography}{44}
\providecommand{\natexlab}[1]{#1}
\providecommand{\url}[1]{\texttt{#1}}
\expandafter\ifx\csname urlstyle\endcsname\relax
  \providecommand{\doi}[1]{doi: #1}\else
  \providecommand{\doi}{doi: \begingroup \urlstyle{rm}\Url}\fi

\bibitem[Achiam et~al.(2024)Achiam, Adler, Agarwal, Ahmad, and Others]{openai2024gpt4}
Josh Achiam, Steven Adler, Sandhini Agarwal, Lama Ahmad, and Others.
\newblock Gpt-4 technical report, 2024.

\bibitem[Alabdulmohsin et~al.(2023)Alabdulmohsin, Chiou, D'Amour, Gretton, Koyejo, Kusner, Pfohl, Salaudeen, Schrouff, and Tsai]{alabdulmohsin2023Adapting}
Ibrahim Alabdulmohsin, Nicole Chiou, Alexander D'Amour, Arthur Gretton, Sanmi Koyejo, Matt~J. Kusner, Stephen~R. Pfohl, Olawale Salaudeen, Jessica Schrouff, and Katherine Tsai.
\newblock Adapting to latent subgroup shifts via concepts and proxies.
\newblock In \emph{International {{Conference}} on {{Artificial Intelligence}} and {{Statistics}}}, pages 9637--9661. PMLR, 2023.

\bibitem[Aragam et~al.(2018)Aragam, Dan, Ravikumar, and Xing]{aragam2018Identifiability}
Bryon Aragam, Chen Dan, Pradeep Ravikumar, and Eric~P. Xing.
\newblock Identifiability of {{Nonparametric Mixture Models}} and {{Bayes Optimal Clustering}}.
\newblock \emph{arXiv:1802.04397 [cs, math, stat]}, February 2018.

\bibitem[Arora et~al.(2012)Arora, Ge, and Moitra]{arora2012learningtopicmodels}
Sanjeev Arora, Rong Ge, and Ankur Moitra.
\newblock Learning topic models - going beyond svd, 2012.
\newblock URL \url{https://arxiv.org/abs/1204.1956}.

\bibitem[Burns et~al.(2023)Burns, Izmailov, Kirchner, Baker, Gao, Aschenbrenner, Chen, Ecoffet, Joglekar, Leike, Sutskever, and Wu]{burns2023WeaktoStrong}
Collin Burns, Pavel Izmailov, Jan~Hendrik Kirchner, Bowen Baker, Leo Gao, Leopold Aschenbrenner, Yining Chen, Adrien Ecoffet, Manas Joglekar, Jan Leike, Ilya Sutskever, and Jeff Wu.
\newblock Weak-to-{{Strong Generalization}}: {{Eliciting Strong Capabilities With Weak Supervision}}, December 2023.

\bibitem[Cai and Pu(2024)]{cai2024Transfer}
T.~Tony Cai and Hongming Pu.
\newblock Transfer {{Learning}} for {{Nonparametric Regression}}: {{Non-asymptotic Minimax Analysis}} and {{Adaptive Procedure}}, January 2024.

\bibitem[Cai and Wei(2019)]{cai2019Transfer}
T.~Tony Cai and Hongji Wei.
\newblock Transfer {{Learning}} for {{Nonparametric Classification}}: {{Minimax Rate}} and {{Adaptive Classifier}}.
\newblock \emph{arXiv:1906.02903 [cs, math, stat]}, June 2019.

\bibitem[Charikar et~al.(2024)Charikar, Pabbaraju, and Shiragur]{charikar2024quantifyinggainweaktostronggeneralization}
Moses Charikar, Chirag Pabbaraju, and Kirankumar Shiragur.
\newblock Quantifying the gain in weak-to-strong generalization, 2024.
\newblock URL \url{https://arxiv.org/abs/2405.15116}.

\bibitem[Cobbe et~al.(2021)Cobbe, Kosaraju, Bavarian, Chen, Jun, Kaiser, Plappert, Tworek, Hilton, Nakano, Hesse, and Schulman]{cobbe2021trainingverifierssolvemath}
Karl Cobbe, Vineet Kosaraju, Mohammad Bavarian, Mark Chen, Heewoo Jun, Lukasz Kaiser, Matthias Plappert, Jerry Tworek, Jacob Hilton, Reiichiro Nakano, Christopher Hesse, and John Schulman.
\newblock Training verifiers to solve math word problems, 2021.
\newblock URL \url{https://arxiv.org/abs/2110.14168}.

\bibitem[Dai et~al.(2007)Dai, Yang, Xue, and Yu]{10.1145/1273496.1273521}
Wenyuan Dai, Qiang Yang, Gui-Rong Xue, and Yong Yu.
\newblock Boosting for transfer learning.
\newblock In \emph{Proceedings of the 24th International Conference on Machine Learning}, ICML '07, page 193–200, New York, NY, USA, 2007. Association for Computing Machinery.
\newblock ISBN 9781595937933.
\newblock \doi{10.1145/1273496.1273521}.
\newblock URL \url{https://doi.org/10.1145/1273496.1273521}.

\bibitem[Fan et~al.(2024)Fan, Lu, Wei, Tian, Qu, Chen, and Cheng]{fan2024giantsshoulderseffortlessweak}
Chenghao Fan, Zhenyi Lu, Wei Wei, Jie Tian, Xiaoye Qu, Dangyang Chen, and Yu~Cheng.
\newblock On giant's shoulders: Effortless weak to strong by dynamic logits fusion, 2024.
\newblock URL \url{https://arxiv.org/abs/2406.15480}.

\bibitem[Ho et~al.(2022)Ho, Yang, and Jordan]{JMLR:v23:20-1129}
Nhat Ho, Chiao-Yu Yang, and Michael~I. Jordan.
\newblock Convergence rates for gaussian mixtures of experts.
\newblock \emph{Journal of Machine Learning Research}, 23\penalty0 (323):\penalty0 1--81, 2022.
\newblock URL \url{http://jmlr.org/papers/v23/20-1129.html}.

\bibitem[Huang et~al.(2006)Huang, Gretton, Borgwardt, Sch\"{o}lkopf, and Smola]{NIPS2006_a2186aa7}
Jiayuan Huang, Arthur Gretton, Karsten Borgwardt, Bernhard Sch\"{o}lkopf, and Alex Smola.
\newblock Correcting sample selection bias by unlabeled data.
\newblock In B.~Sch\"{o}lkopf, J.~Platt, and T.~Hoffman, editors, \emph{Advances in Neural Information Processing Systems}, volume~19. MIT Press, 2006.
\newblock URL \url{https://proceedings.neurips.cc/paper_files/paper/2006/file/a2186aa7c086b46ad4e8bf81e2a3a19b-Paper.pdf}.

\bibitem[Jasmine and Pattabiraman(2018)]{JASMINE2018515}
S.~Graceline Jasmine and V.~Pattabiraman.
\newblock Improved pure pixel identification algorithms to determine the endmembers in hyperspectral images.
\newblock \emph{Computers \& Electrical Engineering}, 71:\penalty0 515--532, 2018.
\newblock ISSN 0045-7906.
\newblock \doi{https://doi.org/10.1016/j.compeleceng.2018.07.023}.
\newblock URL \url{https://www.sciencedirect.com/science/article/pii/S0045790617325351}.

\bibitem[Jiang et~al.(2023)Jiang, Sablayrolles, Mensch, Bamford, Chaplot, de~las Casas, Bressand, Lengyel, Lample, Saulnier, Lavaud, Lachaux, Stock, Scao, Lavril, Wang, Lacroix, and Sayed]{jiang2023mistral7b}
Albert~Q. Jiang, Alexandre Sablayrolles, Arthur Mensch, Chris Bamford, Devendra~Singh Chaplot, Diego de~las Casas, Florian Bressand, Gianna Lengyel, Guillaume Lample, Lucile Saulnier, Lélio~Renard Lavaud, Marie-Anne Lachaux, Pierre Stock, Teven~Le Scao, Thibaut Lavril, Thomas Wang, Timothée Lacroix, and William~El Sayed.
\newblock Mistral 7b, 2023.
\newblock URL \url{https://arxiv.org/abs/2310.06825}.

\bibitem[Kpotufe and Martinet(2018)]{kpotufe2018Marginal}
Samory Kpotufe and Guillaume Martinet.
\newblock Marginal {{Singularity}}, and the {{Benefits}} of {{Labels}} in {{Covariate-Shift}}.
\newblock \emph{arXiv:1803.01833 [cs, stat]}, March 2018.

\bibitem[Lang et~al.(2024)Lang, Sontag, and Vijayaraghavan]{lang2024theoreticalanalysisweaktostronggeneralization}
Hunter Lang, David Sontag, and Aravindan Vijayaraghavan.
\newblock Theoretical analysis of weak-to-strong generalization, 2024.
\newblock URL \url{https://arxiv.org/abs/2405.16043}.

\bibitem[Leike and Sutskever(2023)]{OpenAISuperalignment}
Jan Leike and Ilya Sutskever.
\newblock Introducing superalignment.
\newblock \url{https://openai.com/index/introducing-superalignment/}, 2023.
\newblock Accessed: 2024-04-27.

\bibitem[Lipton et~al.(2018)Lipton, Wang, and Smola]{lipton2018Detecting}
Zachary Lipton, Yu-Xiang Wang, and Alexander Smola.
\newblock Detecting and {{Correcting}} for {{Label Shift}} with {{Black Box Predictors}}.
\newblock In \emph{Proceedings of the 35th {{International Conference}} on {{Machine Learning}}}, pages 3122--3130. PMLR, July 2018.

\bibitem[Liu et~al.(2020)Liu, Li, and Shang]{liu2020computationally}
Ruiqi Liu, Kexuan Li, and Zuofeng Shang.
\newblock A computationally efficient classification algorithm in posterior drift model: Phase transition and minimax adaptivity, 2020.

\bibitem[Maia~Polo et~al.(2024)Maia~Polo, Weber, Choshen, Sun, Xu, and Yurochkin]{polo2024tinybenchmarks}
Felipe Maia~Polo, Lucas Weber, Leshem Choshen, Yuekai Sun, Gongjun Xu, and Mikhail Yurochkin.
\newblock tinybenchmarks: evaluating llms with fewer examples.
\newblock \emph{arXiv preprint arXiv:2402.14992}, 2024.

\bibitem[Maity et~al.(2020)Maity, Sun, and Banerjee]{maity2020Minimax}
Subha Maity, Yuekai Sun, and Moulinath Banerjee.
\newblock Minimax optimal approaches to the label shift problem.
\newblock \emph{arXiv:2003.10443 [math, stat]}, April 2020.

\bibitem[Maity et~al.(2021)Maity, Dutta, Terhorst, Sun, and Banerjee]{maity2021linear}
Subha Maity, Diptavo Dutta, Jonathan Terhorst, Yuekai Sun, and Moulinath Banerjee.
\newblock A linear adjustment based approach to posterior drift in transfer learning.
\newblock \emph{arXiv:2111.10841 [stat]}, December 2021.

\bibitem[Nguyen et~al.(2024{\natexlab{a}})Nguyen, Ho, and Rinaldo]{nguyen2024sigmoidgatingsampleefficient}
Huy Nguyen, Nhat Ho, and Alessandro Rinaldo.
\newblock Sigmoid gating is more sample efficient than softmax gating in mixture of experts, 2024{\natexlab{a}}.
\newblock URL \url{https://arxiv.org/abs/2405.13997}.

\bibitem[Nguyen et~al.(2024{\natexlab{b}})Nguyen, Ho, and Rinaldo]{nguyen2024squareestimationsoftmaxgating}
Huy Nguyen, Nhat Ho, and Alessandro Rinaldo.
\newblock On least square estimation in softmax gating mixture of experts, 2024{\natexlab{b}}.
\newblock URL \url{https://arxiv.org/abs/2402.02952}.

\bibitem[Ouyang et~al.(2022)Ouyang, Wu, Jiang, Almeida, Wainwright, Mishkin, Zhang, Agarwal, Slama, Gray, Schulman, Hilton, Kelton, Miller, Simens, Askell, Welinder, Christiano, Leike, and Lowe]{ouyang2022Training}
Long Ouyang, Jeffrey Wu, Xu~Jiang, Diogo Almeida, Carroll Wainwright, Pamela Mishkin, Chong Zhang, Sandhini Agarwal, Katarina Slama, Alex Gray, John Schulman, Jacob Hilton, Fraser Kelton, Luke Miller, Maddie Simens, Amanda Askell, Peter Welinder, Paul Christiano, Jan Leike, and Ryan Lowe.
\newblock Training language models to follow instructions with human feedback.
\newblock In \emph{Advances in {{Neural Information Processing Systems}}}, October 2022.

\bibitem[Pathak et~al.(2023)Pathak, Sen, Kong, and Das]{pathak2023Transformers}
Reese Pathak, Rajat Sen, Weihao Kong, and Abhimanyu Das.
\newblock Transformers can optimally learn regression mixture models.
\newblock In \emph{The {{Twelfth International Conference}} on {{Learning Representations}}}, October 2023.

\bibitem[Pathak et~al.(2024)Pathak, Sen, Kong, and Das]{pathak2024transformers}
Reese Pathak, Rajat Sen, Weihao Kong, and Abhimanyu Das.
\newblock Transformers can optimally learn regression mixture models.
\newblock In \emph{The Twelfth International Conference on Learning Representations}, 2024.
\newblock URL \url{https://openreview.net/forum?id=sLkj91HIZU}.

\bibitem[Pawelczyk et~al.(2024)Pawelczyk, Sun, Qi, Kumar, and Lakkaraju]{pawelczyk2024generalizingtrustweaktostrongtrustworthiness}
Martin Pawelczyk, Lillian Sun, Zhenting Qi, Aounon Kumar, and Himabindu Lakkaraju.
\newblock Generalizing trust: Weak-to-strong trustworthiness in language models, 2024.
\newblock URL \url{https://arxiv.org/abs/2501.00418}.

\bibitem[Shin et~al.(2024)Shin, Cooper, and Sala]{shin2024weaktostronggeneralizationdatacentriclens}
Changho Shin, John Cooper, and Frederic Sala.
\newblock Weak-to-strong generalization through the data-centric lens, 2024.
\newblock URL \url{https://arxiv.org/abs/2412.03881}.

\bibitem[Somerstep et~al.(2024)Somerstep, Polo, Banerjee, Ritov, Yurochkin, and Sun]{somerstep2024transferlearningframeworkweaktostrong}
Seamus Somerstep, Felipe~Maia Polo, Moulinath Banerjee, Ya'acov Ritov, Mikhail Yurochkin, and Yuekai Sun.
\newblock A transfer learning framework for weak-to-strong generalization, 2024.
\newblock URL \url{https://arxiv.org/abs/2405.16236}.

\bibitem[Team and Others(2024)]{gemmateam2024gemmaopenmodelsbased}
Gemma Team and Others.
\newblock Gemma: Open models based on gemini research and technology, 2024.
\newblock URL \url{https://arxiv.org/abs/2403.08295}.

\bibitem[Touvron et~al.(2023)Touvron, Martin, Stone, Albert, Almahairi, Babaei, Bashlykov, Batra, Bhargava, Bhosale, Bikel, Blecher, Ferrer, Chen, Cucurull, Esiobu, Fernandes, Fu, Fu, Fuller, Gao, Goswami, Goyal, Hartshorn, Hosseini, Hou, Inan, Kardas, Kerkez, Khabsa, Kloumann, Korenev, Koura, Lachaux, Lavril, Lee, Liskovich, Lu, Mao, Martinet, Mihaylov, Mishra, Molybog, Nie, Poulton, Reizenstein, Rungta, Saladi, Schelten, Silva, Smith, Subramanian, Tan, Tang, Taylor, Williams, Kuan, Xu, Yan, Zarov, Zhang, Fan, Kambadur, Narang, Rodriguez, Stojnic, Edunov, and Scialom]{touvron2023llama}
Hugo Touvron, Louis Martin, Kevin Stone, Peter Albert, Amjad Almahairi, Yasmine Babaei, Nikolay Bashlykov, Soumya Batra, Prajjwal Bhargava, Shruti Bhosale, Dan Bikel, Lukas Blecher, Cristian~Canton Ferrer, Moya Chen, Guillem Cucurull, David Esiobu, Jude Fernandes, Jeremy Fu, Wenyin Fu, Brian Fuller, Cynthia Gao, Vedanuj Goswami, Naman Goyal, Anthony Hartshorn, Saghar Hosseini, Rui Hou, Hakan Inan, Marcin Kardas, Viktor Kerkez, Madian Khabsa, Isabel Kloumann, Artem Korenev, Punit~Singh Koura, Marie-Anne Lachaux, Thibaut Lavril, Jenya Lee, Diana Liskovich, Yinghai Lu, Yuning Mao, Xavier Martinet, Todor Mihaylov, Pushkar Mishra, Igor Molybog, Yixin Nie, Andrew Poulton, Jeremy Reizenstein, Rashi Rungta, Kalyan Saladi, Alan Schelten, Ruan Silva, Eric~Michael Smith, Ranjan Subramanian, Xiaoqing~Ellen Tan, Binh Tang, Ross Taylor, Adina Williams, Jian~Xiang Kuan, Puxin Xu, Zheng Yan, Iliyan Zarov, Yuchen Zhang, Angela Fan, Melanie Kambadur, Sharan Narang, Aurelien Rodriguez, Robert Stojnic, Sergey Edunov, and Thomas
  Scialom.
\newblock Llama 2: Open foundation and fine-tuned chat models, 2023.

\bibitem[Wang et~al.(2024)Wang, Zhu, Saxon, Steyvers, and Wang]{wang2024largelanguagemodelslatent}
Xinyi Wang, Wanrong Zhu, Michael Saxon, Mark Steyvers, and William~Yang Wang.
\newblock Large language models are latent variable models: Explaining and finding good demonstrations for in-context learning, 2024.
\newblock URL \url{https://arxiv.org/abs/2301.11916}.

\bibitem[Wu and Sahai(2024)]{wu2024provableweaktostronggeneralizationbenign}
David~X. Wu and Anant Sahai.
\newblock Provable weak-to-strong generalization via benign overfitting, 2024.
\newblock URL \url{https://arxiv.org/abs/2410.04638}.

\bibitem[Xie et~al.(2021)Xie, Raghunathan, Liang, and Ma]{xie2021Explanation}
Sang~Michael Xie, Aditi Raghunathan, Percy Liang, and Tengyu Ma.
\newblock An {{Explanation}} of {{In-context Learning}} as {{Implicit Bayesian Inference}}.
\newblock \emph{arXiv:2111.02080 [cs]}, November 2021.

\bibitem[Xue et~al.(2025)Xue, Li, and Mirzasoleiman]{xue2025representationsshapeweaktostronggeneralization}
Yihao Xue, Jiping Li, and Baharan Mirzasoleiman.
\newblock Representations shape weak-to-strong generalization: Theoretical insights and empirical predictions, 2025.
\newblock URL \url{https://arxiv.org/abs/2502.00620}.

\bibitem[Yang et~al.(2024{\natexlab{a}})Yang, Shen, Shen, Yao, Liu, Gong, Lin, and Wen]{yang2024superficialalignmentstrongmodelsdeceive}
Wenkai Yang, Shiqi Shen, Guangyao Shen, Wei Yao, Yong Liu, Zhi Gong, Yankai Lin, and Ji-Rong Wen.
\newblock Super(ficial)-alignment: Strong models may deceive weak models in weak-to-strong generalization, 2024{\natexlab{a}}.
\newblock URL \url{https://arxiv.org/abs/2406.11431}.

\bibitem[Yang et~al.(2024{\natexlab{b}})Yang, Ma, and Liu]{yang2024weaktostrongreasoning}
Yuqing Yang, Yan Ma, and Pengfei Liu.
\newblock Weak-to-strong reasoning, 2024{\natexlab{b}}.
\newblock URL \url{https://arxiv.org/abs/2407.13647}.

\bibitem[Yao et~al.(2025)Yao, Yang, Wang, Lin, and Liu]{yao2025understandingcapabilitieslimitationsweaktostrong}
Wei Yao, Wenkai Yang, Ziqiao Wang, Yankai Lin, and Yong Liu.
\newblock Understanding the capabilities and limitations of weak-to-strong generalization, 2025.
\newblock URL \url{https://arxiv.org/abs/2502.01458}.

\bibitem[Zhang et~al.(2024)Zhang, Zhu, Saphra, Kleiman, Edelman, Tambe, Kakade, and Malach]{zhang2024transcendencegenerativemodelsoutperform}
Edwin Zhang, Vincent Zhu, Naomi Saphra, Anat Kleiman, Benjamin~L. Edelman, Milind Tambe, Sham~M. Kakade, and Eran Malach.
\newblock Transcendence: Generative models can outperform the experts that train them, 2024.
\newblock URL \url{https://arxiv.org/abs/2406.11741}.

\bibitem[Zhang et~al.(2015)Zhang, Gong, and Scholkopf]{zhang2015Multisource}
Kun Zhang, Mingming Gong, and Bernhard Scholkopf.
\newblock Multi-source domain adaptation: A causal view.
\newblock In \emph{Proceedings of the {{Twenty-Ninth AAAI Conference}} on {{Artificial Intelligence}}}, {{AAAI}}'15, pages 3150--3157, Austin, Texas, January 2015. AAAI Press.
\newblock ISBN 978-0-262-51129-2.

\bibitem[Zhou et~al.(2024)Zhou, Liu, Liu, Dong, Yang, and Qiao]{zhou2024weaktostrongsearchalignlarge}
Zhanhui Zhou, Zhixuan Liu, Jie Liu, Zhichen Dong, Chao Yang, and Yu~Qiao.
\newblock Weak-to-strong search: Align large language models via searching over small language models, 2024.
\newblock URL \url{https://arxiv.org/abs/2405.19262}.

\bibitem[Zhu et~al.(2024)Zhu, He, Wang, Liu, and Wang]{zhu2024weaktostrongpreferenceoptimizationstealing}
Wenhong Zhu, Zhiwei He, Xiaofeng Wang, Pengfei Liu, and Rui Wang.
\newblock Weak-to-strong preference optimization: Stealing reward from weak aligned model, 2024.
\newblock URL \url{https://arxiv.org/abs/2410.18640}.

\end{thebibliography}
\bibliographystyle{plainnat}
\onecolumn
\newpage
\appendix
\section{Proofs}
\label{a:proofs}
\begin{proof}[Proof of Proposition \ref{prop: identification}]
    Consider the set up in section \ref{Sec:setup}. We need to show that if $f(x, y; \eta, \theta, K) = f(x, y; \tilde{\eta}, \tilde{\theta}, \tilde{K})$ for almost every $x$ and $y$, then $F(\eta, \theta, K) \equiv F(\tilde{\eta}, \tilde{\theta}, \tilde{K})$. Supposing that the hypothesis holds true, we have
    \[ \sum_{k=1}^K \frac{\pi_k g(x|\eta_k)}{\sum_{k'=1}^K\pi_{k'} g(x|\eta_{k'})} \varphi(y; x, \theta_k) =  \sum_{k=1}^{\tilde{K}} \frac{\tilde{\pi}_k g(x|\tilde{\eta}_k)}{\sum_{k'=1}^{\tilde{K}}\tilde{\pi}_{k'} g(x|\tilde{\eta}_{k'})}\varphi(y; x, \tilde{\theta}_k) \quad x, y {} \text{ almost everywhere}.\]
    Under the linear independence assumption on $\varphi(y; x \theta)$ if $K \neq \tilde{K}$ then there exists $k^* \in [K]$ such that $\theta_{k^*} \neq \tilde{\theta}_k$ for all $k \in [\tilde{K}]$. But this implies that $g(x|\eta_{k^*}) = 0$ which is a contradiction since $g(x|\eta_{k^*})$ is a probability density. Thus we must have $K = \tilde{K}$. Now note that because the marginal $p(X)$ must remain the same over $f(x; \tilde{\eta}, \tilde{\theta}, {K})$ and $f(x; {\eta}, {\theta}, {K})$, we have that
    \[\sum_{k'=1}^K\pi_{k'} g(x|\eta_{k'}) - \sum_{k'=1}^{{K}}\tilde{\pi}_{k'} g(x|\tilde{\eta}_{k'}) = 0\]
    Thus is a violation of the linear independence requirement on $g(x; \eta)$ unless we have that the collections $\{g(x|\eta_{k'})\}$ and $\{g(x|\tilde{\eta}_{k'})\}$ are equivalent. Because we are only interested in identifiability over permutation of $k$, WLOG we may take
    \[ \pi_k g(x|\eta_{k'}) = \tilde{\pi}_k g(x|\tilde{\eta}_{k'})\]
    Integrating over $x$ shows that $\pi_k = \tilde{\pi}_k $ which in turn implies that $ g(x|\eta_{k'}) = g(x|\tilde{\eta}_{k'})$ so that $\eta_k = \tilde{\eta}_k$. Now putting what we have so far together we get that $\sum_k \pi_k g(x; \eta_k)(\varphi(y; x \theta_k)- \varphi(y;x \tilde{\theta}_k)) = 0$ for almost every $x$. Note that $g(x;\eta_k) \geq 0$ so because there can be no linear relationship among the set $\{\varphi(y; x, \theta_k), \varphi(y; x, \tilde{\theta}_k)\}$, we have that $\theta_k \tilde{\theta}_k$ which establishes that $F_*(\eta, \theta, K) \equiv F_*(\tilde{\eta}, \tilde{\theta}, \tilde{K})$
\end{proof}
\begin{lemma}
\label{lem: cycle}
    Suppose that $\theta$ and $\theta'$ are distinct transition matrices for a Markov process on a finite state space $\cH$ of size $M$. Then there exists a sequence of states $h_{i_1}, \ldots, h_{i_m}$ such that $\Pr_{\theta}[h_{i_1} \ldots h_{i_m}h_{i_1} ] > \Pr_{\theta'}[h_{i_1} \ldots h_{i_m}h_{i_1} ]$.
\end{lemma}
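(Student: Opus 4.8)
The plan is to pass to logarithms, turning the multiplicative cycle probabilities into additive edge weights, and then to show that the two chains must disagree on the total weight of some directed cycle. Writing the cyclic probability of a sequence $C = (h_{i_1}\cdots h_{i_m}h_{i_1})$ as the product $\Pr_{\theta}[C] = \prod_{(a,b)\in C}\theta_{ab}$ over its transition entries, I define the edge weight $w_{ab} = \log(\theta_{ab}/\theta'_{ab})$ on every edge with $\theta_{ab}>0$. Then $\log(\Pr_{\theta}[C]/\Pr_{\theta'}[C]) = \sum_{(a,b)\in C} w_{ab} =: W(C)$, so the lemma is equivalent to exhibiting a directed cycle $C$ of positive $\theta$-probability with $W(C)>0$. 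One case can be dispatched immediately: if some edge has $\theta_{ab}>0$ but $\theta'_{ab}=0$, then---assuming $\theta$ is irreducible so that $b$ can return to $a$---that edge lies on a $\theta$-positive cycle $C$ with $\Pr_{\theta'}[C]=0<\Pr_{\theta}[C]$, and we are done. Hence I may assume $\theta'_{ab}>0$ whenever $\theta_{ab}>0$, making every relevant $w_{ab}$ finite.

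For the main case I would combine the stationary distribution of $\theta$ with a flow-decomposition argument. Let $\pi$ be a stationary distribution of $\theta$, strictly positive when $\theta$ is irreducible, and consider the edge measure $\mu(a,b)=\pi_a\theta_{ab}\ge 0$. A one-line computation from $\pi\theta=\pi$ shows $\mu$ is a \emph{circulation}: its inflow and outflow balance at every state, $\sum_a\mu(a,b)=\pi_b=\sum_b\mu(a,b)$. The key quantity is the Kullback--Leibler divergence rate $D=\sum_{a,b}\mu(a,b)\,w_{ab}=\sum_a\pi_a\sum_b\theta_{ab}\log(\theta_{ab}/\theta'_{ab})$. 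The inner sum is the relative entropy between the $a$-th rows of $\theta$ and $\theta'$, hence nonnegative and strictly positive whenever those rows differ; since $\theta\neq\theta'$ some row differs and $\pi_a>0$ for all $a$, so $D>0$.

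The final step converts positivity of this weighted average over edges into positivity on a single cycle. Because $\mu$ is a nonnegative circulation on a finite graph, the flow-decomposition theorem writes it as a nonnegative combination of directed cycles, $\mu=\sum_C\alpha_C\,\mathbf{1}_C$ with $\alpha_C\ge 0$. Substituting gives $0<D=\sum_{a,b}\mu(a,b)w_{ab}=\sum_C\alpha_C\,W(C)$, so at least one cycle $C^*$ satisfies $W(C^*)>0$, i.e.\ $\Pr_{\theta}[C^*]>\Pr_{\theta'}[C^*]$; reading off its vertices yields the desired sequence $h_{i_1},\dots,h_{i_m}$.

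The main obstacle---and the place I would be most careful---is not the computation but the hypotheses needed to force the disagreement onto a cycle. The argument establishes the stated direction ($\theta$ dominating) exactly when the differing rows meet a recurrent class of $\theta$, which holds if $\theta$ is irreducible (the natural assumption in the ergodic-HMM setting of Example \ref{example Xie}); if the two matrices differ only on states transient under both chains, no cycle can separate them and one must instead use a recurrent class of $\theta'$, giving the inequality ``after possibly exchanging $\theta$ and $\theta'$.'' I would therefore state the lemma under irreducibility or add that caveat. An alternative route avoids the stationary distribution: assume for contradiction that $W(C)\le 0$ for every cycle, invoke the standard equivalence between ``no positive cycle'' and the existence of a potential $\phi$ with $w_{ab}\le\phi_b-\phi_a$, and deduce $u_a\theta_{ab}\le u_b\theta'_{ab}$ for $u_a=e^{\phi_a}>0$; summing over $b$ gives $u\le\theta'u$ entrywise, and Perron--Frobenius for the irreducible stochastic matrix $\theta'$ forces $u$ constant, whence $\theta_{ab}\le\theta'_{ab}$ for all $a,b$ and, by stochasticity, $\theta=\theta'$, a contradiction.
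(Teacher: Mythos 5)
Your proof is correct under the irreducibility hypothesis you flag, and it takes a genuinely different route from the paper's. The paper argues by a greedy combinatorial construction: starting from an entry with $\theta(h_{i_2}\mid h_{i_1})>\theta'(h_{i_2}\mid h_{i_1})$ (which exists by row-stochasticity), it grows a path of strictly dominating edges, using the fact that rows sum to one to show that from the current endpoint one can always either move to a fresh state with $\theta>\theta'$ or close the walk with a $\theta\ge\theta'$ edge, and finiteness of $\cH$ forces closure within $M$ steps. Your route instead averages and then localizes: positivity of the KL rate $\sum_a\pi_a\,\mathrm{KL}(\theta_a\Vert\theta'_a)$ (with $\pi$ stationary for $\theta$), together with the flow decomposition of the circulation $\mu(a,b)=\pi_a\theta_{ab}$ into directed cycles, forces some single cycle to carry positive log-ratio weight. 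The paper's argument is more elementary (no stationary distribution, no flow decomposition); yours is more structural and makes transparent exactly which hypotheses are doing the work.

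That transparency matters here, because your caveat about irreducibility is not optional bookkeeping: the lemma as stated is false for general distinct stochastic matrices, and the paper's own proof has gaps exactly where your analysis predicts. First, in the paper's diagonal case, a differing diagonal entry need not be larger under $\theta$: take $\cH=\{1,2\}$ with $\theta$ sending $1\to2$ and $2\to2$ with probability one, and $\theta'$ splitting mass $\nicefrac{1}{2}$/$\nicefrac{1}{2}$ at state $1$ between staying and moving to $2$. Every closed walk through state $1$ has probability $0$ under $\theta$ (the edge $2\to1$ and the self-loop at $1$ both have probability zero), so no cycle witnesses $\theta$ dominating, only the reverse. Second, in the greedy case, the ``closing'' edge produced by the row-sum argument may have probability zero under both matrices: with $\cH=\{1,2,3\}$, $\theta$ sending $1\to2$, $\theta'$ sending $1\to3$, and states $2,3$ absorbing under both, the paper's construction closes the cycle $(1,2)$ and obtains $0=0$ rather than a strict inequality; in this example no cycle separates $\theta$ from $\theta'$ in either direction, matching your observation that differences supported on transient states are invisible to cycles. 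So the lemma needs the irreducibility (or recurrence-meeting) hypothesis you propose, which is reasonable in the ergodic-HMM application, and your two-case proof is then sound. One small point to tighten in your alternative argument: concluding that $u$ is constant from $u\le\theta'u$ uses irreducibility of $\theta'$, not of $\theta$ (via $\pi'^{\top}(\theta'u-u)=0$, $\theta'u-u\ge0$, $\pi'>0$), so that variant requires both matrices irreducible, whereas your main KL/flow argument needs only $\theta$ irreducible.
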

\begin{proof}
    If the matrices $\text{diag}[\theta]$ and $\text{diag}[\theta']$ are not equivalent, then we are finished, as we can take the cycle to be of length 1. If  $\text{diag}[\theta] \neq \text{diag}[\theta']$, then there exists states $h_{i_1}$, $h_{i_2}$ such that $\theta(h_{i_2}|h_{i_1}) > \theta'(h_{i_2}|h_{i_1})$. Now if $\theta(h_{i_1}|h_{i_2}) \geq \theta'(h_{i_1}|h_{i_2})$ we are done, so consider the case where $\theta(h_{i_1}|h_{i_2}) < \theta'(h_{i_1}|h_{i_2})$. In this case, there must exist state $h_{i_3}$ distinct from $h_{i_2}$ and $h_{i_1}$ such that $\theta(h_{i_3}|h_{i_2}) > \theta'(h_{i_3}|h_{i_2})$. Consider the $m'-1$'th step of this process, where have constructed a sequence of states $h_{i_1}, \ldots h_{i_{m'-1}}$ such that $\theta(h_{i_j}|h_{i_j-1}) > \theta'(h_{i_j}|h_{i_j})$ for $1<j\leq m'-1$. Suppose $m'-1 < M$, then either there exists a $j^*<m'-s$ such that $\theta(h_{i_{j^*}}|h_{i_{m'-1}}) \geq \theta'(h_{i_{j^*}}|h_{i_{m'-1}})$ and the proof is complete or there is a distinct (from $\{h_{i_j}\}_{j \leq m'-1}$) state $h_{m'}$ such that $\theta(h_{i_{m'}}|h_{i_{m'-1}}) > \theta'(h_{i_{m'}}|h_{i_{m'-1}})$. Finally, consider the case where have followed this process to the $M$'th step, in particular we are at state $h_{i_M}$, in this case there must exists $h_{i_{j^*}}$ such that $\theta(h_{i_{j^*}}|h_{i_M})\geq\theta(h_{i_{j^*}}|h_{i_M})$ as otherwise we would have $\sum_j \theta(h_{i_{j}}|h_{i_M}) <1$, the existence of such a state completes the proof.
\end{proof}
\begin{proof}
    
We establish the algebraic independence condition for the HMM set up. For a collection of transition matrices $\{\theta_1, \dots, \theta_k\}$ and prior $(\pi_1, \ldots \pi_K)$ we may write the distribution of the pair $(x, y')$ as
\begin{align*}
    p(x, y') = \sum_{k} \sum_{h \in \cH} p(y'|h, \tau^w)p(h|x, \theta_k)p_k(x)\pi_k
\end{align*}
Recall for identifiability we need that if $\tilde{p}(x, y')$ is the density of $(x,y')$ for the collection of distinct parameters $\{\tilde{\theta}_1, \dots, \tilde{\theta}_k\}$ and $\{\tilde{\pi}_1, \ldots \tilde{\pi}_K\}$ then $p(x, y')$ and $\tilde{p}(x, y')$ need to be linearly independent. Note that we may write
\begin{align*}
     \alpha p(x, y') + \beta \tilde{p}(x,y') = \\
    \sum_{h \in \cH}p(y'|h, \tau^w)[\alpha [\sum_k p(h|x, \theta_k)p_k(x) \pi_k ] + \beta [\sum_k p(h|x, \tilde{\theta}_k) \tilde{\pi}_k ] ]
\end{align*}
Given the assumption that there exists a vector of tokens $(o^*_1, \dots, o_H^*)$ such that $p(o^*_h|h, \tau^w) >0 $ and $p(o^*_{h'}|h, \tau^w) = 0 $, the only way for the function to be identically zero is if 
\[\alpha [\sum_k p(h|x, \theta_k)p_k(x) \pi_k ] + \beta [\sum_k p(h|x, \tilde{\theta}_k) \tilde{p}_k(x)\tilde{\pi}_k ] = 0; \quad \text{for all } h \in \cH\]
First, we consider the case where the collections $\{\theta_k\}$ and $\{\tilde{\theta}_k\}$ are distinct, WLOG we may say that $\theta_1 \neq \tilde{\theta}_k$ for all $k \in K$, and $\theta_1 \notin \{\theta_k\}_{k>2}$. WLOG, we may reference Lemma $\ref{lem: cycle}$ and note that there exists a cycle $h_{i_1}\ldots h_{i_m} h_{i_1}$ such that $p_1(h_{i_1}\ldots h_{i_m} h_{i_1}) > \tilde{p}_k(h_{i_1}\ldots h_{i_m} h_{i_1})$ for $k \in K$. 

By the assumptions, with non-zero probability it may occur that $x = [o^*_{h_1}\ldots o^*_{h_{i_m}}]^l$ for $l \in \mathbb{N}^+$. Letting $p_k \triangleq p_1(h_{i_1}\ldots h_{i_m} h_{i_1}) $ for the above to hold, it must necessarily hold that
\begin{align*}
     \alpha [ \sum_k p(h|h_{i_m}, \theta_k) [\prod_j p(o^*_{i_j}|h_{i_j})]^l [p_k]^l \pi_k ] + \beta [\sum_k p(h|h_{i_m}, \tilde{\theta}_k) [\prod_j p(o^*_{i_j}|h_{i_j})]^l [\tilde{p}_k]^l \tilde{\pi}_k]=0\\
    \implies \frac{\alpha [ \sum_k p(h|h_{i_m}, \theta_k) [p_k]^l \pi_k ] }{\beta [\sum_k p(h|h_{i_m}, \tilde{\theta}_k) [\tilde{p}_k]^l \tilde{\pi}_k]} = 1 \text{ for all } l \in \mathbb{N}^+, h \in \cH
\end{align*}
Note though that it is impossible for this to hold for all large $l$, thus if any of the transition matrices are distinct the algebraic independence condition must hold. In the case where the collection of transition matrices $\{\theta_k\}$ are identical $\{\tilde{\theta}_k\}$, a similar argument will show that $\pi_k = \tilde{\pi}_k$ for all $k$ (\ie use the fact that $\theta_k \neq \theta_{k'}$ and select a cycle as we did before).
\end{proof}
\begin{proof}[Proof of Proposition \ref{prop: convergence}]
The key ingredient is that step 3 of Algorithm \ref{alg:identification} produces the correct assignments. By Assumption \ref{ass: perm} for all
$k \in [K]$ and some $\Delta>0$ we have that (i)  $\argmin_{k' \neq k} ||\theta_k^{w_q}-{\theta}_{k'}^{w_p}||^2 >c+\Delta$ (ii) $||\theta_k^{w_q}-{\theta}_{k}^{w_p}||^2<c$. Under the assumptions of Proposition \ref{prop: convergence} we have with high probability
\[||\hat{\theta}_k^{w_q}-\hat{\theta}_{k}^{w_p}||^2 = ||\hat{\theta}_k^{w_q}-{\theta}_k^{w_q}-(\hat{\theta}_{k}^{w_p}-{\theta}_{k}^{w_p}) +{\theta}_k^{w_q}-{\theta}_{k}^{w_p}||^2\]
\[\leq ||\hat{\theta}_k^{w_q}-{\theta}_k^{w_q}||^2 + ||-(\hat{\theta}_{k}^{w_p}-{\theta}_{k}^{w_p})||^2 +  ||{\theta}_k^{w_q}-{\theta}_{k}^{w_p}||^2 \lesssim \cO([\nicefrac{\log n}{n}]^{\alpha})+c\]
and additionally for $k \neq k'$, we have that
\[||{\theta}_k^{w_q}-{\theta}_{k'}^{w_p}||^2 = ||{\theta}_k^{w_q}-\hat{{\theta}}_k^{w_q}+(\hat{\theta}_{k}^{w_p}-{\theta}_{k'}^{w_p}) -\hat{\theta}_{k}^{w_q}+\hat{\theta}_{k'}^{w_p}||^2 \]
\[\leq ||{\theta}_k^{w_q}-\hat{{\theta}}_k^{w_q}||^2 + ||\hat{\theta}_{k}^{w_p}-{\theta}_{k'}^{w_p}||^2 + ||\hat{\theta}_{k}^{w_q}-\hat{\theta}_{k'}^{w_p}||^2 \]
\[\lesssim \cO([\nicefrac{\log n}{n}]^{\alpha}) + ||\hat{\theta}_{k}^{w_q}-\hat{\theta}_{k'}^{w_p}||^2\]
Or re-arranging the inequality
\[ ||\hat{\theta}_{k}^{w_q}-\hat{\theta}_{k'}^{w_p}||^2 \geq ||{\theta}_k^{w_q}-{\theta}_{k'}^{w_p}||^2 -\cO([\nicefrac{\log n}{n}]^{\alpha}) \geq c+\Delta - \cO([\nicefrac{\log n}{n}]^{\alpha}) \]
so that for if $\Delta > \cO([\nicefrac{\log n}{n}]^{\alpha})$ it must hold true that $\argmin_{k'}||{\theta}_k^{w_q}-{\theta}_{k'}^{w_p}||^2 = k$.
\end{proof}
\begin{proof}[Proof of Proposition \ref{prop:regression}]
In the notation of Algorithm \ref{alg:identification} we have established that 
\begin{align*}\hat{q}(x) |\gets \sum_k \frac{\hat{\pi}_{a(k)}^q g(x|\hat{\eta}_{a(k)})}{[\sum_{k'}\hat{\pi}_{a(k')}^q g(x|\hat{\eta}_{a(k')})]} \mu(x; \hat{\theta}_{a(k)})\\
a(k) = k \text{ for all } k \in [K]\end{align*}
So we seek a bound on
\[\int_{\cX} || \sum_k \frac{\hat{\pi}_{k}^q g(x|\hat{\eta}_{k})}{[\sum_{k'}\hat{\pi}_{k'}^q g(x|\hat{\eta}_{k'})]} \mu(x; \hat{\theta}_k) -  \sum_k \frac{{\pi}_{k}^q g(x|{\eta}_{k})}{[\sum_{k'}{\pi}_{k'}^q g(x|{\eta}_{k'})]} \mu(x; {\theta}_k)||^2_2 dQ(x)\]
\[\overset{\leq}{\text{Triangle Inequality}} \sum_k \int_{\cX} || \frac{\hat{\pi}_{k}^q g(x|\hat{\eta}_{k})}{[\sum_{k'}\hat{\pi}_{k'}^q g(x|\hat{\eta}_{k'})]} \mu(x; \hat{\theta}_k) - \frac{{\pi}_{k}^q g(x|{\eta}_{k})}{[\sum_{k'}{\pi}_{k'}^q g(x|{\eta}_{k'})]} \mu(x; {\theta}_k)||^2_2 dQ(x) \]
So what remains is to establish the Lipschitz continuity of the function $\mu_{Q}(x; \pi, \eta, \theta) = \frac{{\pi}_{k}^q g(x|{\eta}_{k})}{[\sum_{k'}{\pi}_{k'}^q g(x|{\eta}_{k'})]} \mu(x; {\theta}_k)$. Note that we can equivalently write
\[\mu_{Q}(x; \pi, \eta, \theta) = \text{SFTMAX}([\log \pi_1 g(x| \eta_1), \ldots, \log \pi_K g(x| \eta_K)])_{k} \mu(x| \theta_k)\]\
where $\text{SFTMAX}$ denotes the softmax function. Thus Lipschitness follows from the assumption that $\log \pi g(x| \eta)$ and $\mu(x| \theta_k)$ are both Lipschitz and bounded and the fact that the softmax function composed with a set of $K$ Lipschitz function is itself Lipschitz.
\end{proof}
\begin{proof}
  First, multiply the optimization problem by $\nicefrac{1}{1+\lambda}$
  \[\argmin_{\theta \in \Theta} \frac{1}{1+\lambda}\sum_{i=1}^{n_p} (f(x_i, \theta)-y_i)^2 + \frac{\lambda}{1+\lambda} \sum_{i=1}^{n_{Q'}} (f(x_i, \theta)- y'_i)^2\]
  Then let $\eta = \frac{1}{1+\lambda}$ and expand terms to get 
  \[\argmin_{\theta \in \Theta} \sum_{i=1}^n f(x_i, \theta)^2 -2\eta y_i f(x_i, \theta) - 2(1-\eta)y'_i f(x_i \theta) + \eta y_i^2 + (1-\eta)y'^{2}_i\]
  Next we add and subtract $(\eta y_i + (1-\eta) y'_i)^2$ to complete the square, and drop the addition of any terms that are not a function of $\theta$.
  \[\hat{\theta}_{\text{wt}} = \argmin_{\theta \in \Theta} \sum_{i=1}^n (f(x_i, \theta) - (\eta y_i + (1-\eta) y'_i))^2 \]
  Due to the assumption that the function $f(x; \theta)$ is Lipschitz in $\theta$, the empirical loss satisfies uniform convergence and thus
  \[\hat{\theta}_{\text{wt}} \overset{p}{\rightarrow}\argmin_{\theta} \mathbb{E}_{x, y, y'} (f(x, \theta) - (\eta y + (1-\eta) y'))^2\]
  Which implies that 
  \[\hat{\theta}_{\text{wt}} \overset{p}{\rightarrow}\argmin_{\theta} \mathbb{E}_{x} (f(x, \theta) - (\eta \sum_k \pi_k^p f(x; \theta_k) + (1-\eta) \sum_k \pi_k^q f(x; \theta^{q w}_k)))^2\]
  Note that we may write
  \[\mathbb{E}_X(\mathbb{E}_y[y|x]-(\eta \sum_k \pi_k^p f(x; \theta_k) + (1-\eta) \sum_k \pi_k^q f(x; \theta^{q w}_k)))^2 \]
  \[= \eta ||\epsilon_P||^2 + (1-\eta)^2||\epsilon_{Q'}||^2 + \eta(1-\eta)\epsilon_P^T\epsilon_{Q'}  \]
  By the assumption on $\Theta$, $f(x; \theta^*_{\text{wt}})$ must be within a ball of radius $r^2$ of $\eta \sum_k \pi_k^p f(x; \theta_k) + (1-\eta) \sum_k \pi_k^q f(x; \theta^{q w}_k)$, which completes the proof.

\end{proof}
\begin{proof}[Proof of Proposition \ref{prop: refinement form}]
We have
\[\int_{\cY'}p(y|x, y')q(y'|x)dy' = \int_{\cY'}p(y|x, y')\sum_{k'}q(y'|x, k')q(k'|x)dy'\]
\[ = \int_{\cY'}[\sum_k p(y|x, \cancel{y'}, k)p(k|x ,y')]\sum_{k'}q(y'|x, k')q(k'|x)dy \]
   \[ = \sum_k p(y|x, k)\int_{\cY'}p(k|x, y')\sum_{k'}q(y'|x, k')q(k'|x)dy' \]
    \[= \sum_k p(y|x, k) [\sum_{k'} q(k'|x) \int_{\cY'}p(k|x, y')q(y'|x, k')dy'\]
\end{proof}
\begin{proof}[Calculation of Example \ref{ex: WLI}]
    By Proposition \ref{prop: refinement form} we have \begin{align*}
        \hat{q}(y|x) = \sum_k p(y|x, k)\hat{q}(k|x),\\
        \hat{q}(k|x) = \int_{\cY'}p(k\mid x,y')q(y'|x, k_Q)dy'
    \end{align*}
    Now by Bayes' rule note that 
    \[\int_{\cY'}p(k\mid x,y')q(y'|x, k_Q)dy'\] 
    \[= p(k|x) \int_{\cY'}\frac{p(y'\mid x,k)q(y'|x, k_Q)}{\sum_k p(k|x) p(y'|x, k)}dy'\]
    \[\leq p(k|x) \int_{\cY'}\frac{p(y'\mid x,k)q(y'|x, k_Q)}{ p(k|x) p(y'|x, k) + p(k_Q|x) p(y'|x, k_Q)} \]
Now we plug in the parametric form of Assumption \ref{ass: GMOE} to arrive at 
\begin{align*}\hat{q(k|x)} \leq p(k|x) \times
\int_{-\infty}^{\infty}\frac{e^{-\frac{1}{2\sigma^2}(y'-\mu(x; \theta^{w_p}_{k}))^2}e^{-\frac{1}{2\sigma^2}(y'-\mu(x; \theta^{w}_{k_Q}))^2}}{e^{-\frac{1}{2\sigma^2}(y'-\mu(x; \theta^{w_p}_{k}))^2}+e^{-\frac{1}{2\sigma^2}(y'-\mu(x; \theta^{w}_{k_Q}))^2}}dy' \end{align*}
Multiplying the numerator and denominator by $e^{\frac{1}{2\sigma^2}(y'-\mu(x; \theta^{w}_{k_Q}))^2}$, and letting $\Delta_k^2(x)$ be defined as in the statement we have 
\begin{align*}
   \hat{q}(k|x) \leq 
p(k|x) \times \int \frac{e^{-\frac{1}{2\sigma^2}(y'-\mu(x; \theta^{w}_{k_Q}))^2}}{1+e^{\Delta_k^2(x) -2\Delta_k(x)(y-\mu(x;\theta^w_{k_Q})) }}dy' 
\end{align*}
Then we simply break the integral into the two cases: $|y' - \mu(x; \theta_Q^w)|<|\Delta_k(x)|/4$ and $|y' - \mu(x; \theta_Q^w)|\geq|\Delta_k(x)|/4$ to arrive at the final result.
\end{proof}
\section{Details of Figure \ref{fig:plot}}
\label{supp_fig_plot}
Figure \ref{fig:plot} tests three weak to strong methods using the GSM8K \citep{cobbe2021trainingverifierssolvemath} data set. Weak labels are produced by Llama-2-7B-Chat \citep{touvron2023llama}, Mistral-7B \citep{jiang2023mistral7b}, and Gemma-1.2B \citep{gemmateam2024gemmaopenmodelsbased}  models. To provide some expertise on the task, each weak model recieved supervised fine tuning with gold standard data, the data is produced by \citep{yang2024weaktostrongreasoning}.  The strong model is GPT-3.5-Turbo-0125 \citep{openai2024gpt4}. We compare thee weak to strong methods: (i) simply training GPT-3.5-Turbo on the weak data (ii) Using the ICL refinement method of \citet{somerstep2024transferlearningframeworkweaktostrong} (iii) an oracle method where GPT-4o produces answers to the training set that GPT-3.5-Turbo is trained on. Evaluation is done on a provided test set with answer key included, GPT-4o is used to judge if the given test response matches the answer key in both the reasoning and the final answer. 
 
\end{document}